\newtheorem{theorem}{Theorem}
\newtheorem{lemma}{Lemma}
\newcommand{\E}{\mathbb{E}}
\newcommand{\br}{\mathbf{r}}
\newcommand{\bx}{\mathbf{x}}
\newcommand{\bp}{\mathbf{p}}
\newcommand{\bq}{\mathbf{q}}
\newcommand{\Ocal}{\mathcal{O}}
\newcommand{\Bcal}{\mathcal{B}}
\newcommand{\Pcal}{\mathcal{P}}
\newcommand{\norm}[1]{\|#1\|}
\newcommand{\secref}[1]{Sec.~\ref{#1}}
\renewcommand{\eqref}[1]{Eq.~(\ref{#1})}
\newcommand{\lemref}[1]{Lemma~\ref{#1}}
\newcommand{\thmref}[1]{Thm.~\ref{#1}}
\newcommand{\algref}[1]{Algorithm~\ref{#1}}
\icmltitlerunning{Decoupling Exploration and Exploitation in Multi-Armed Bandits}
\begin{document}

\twocolumn[
\icmltitle{Decoupling Exploration and Exploitation in Multi-Armed Bandits}

% It is OKAY to include author information, even for blind
% submissions: the style file will automatically remove it for you
% unless you've provided the [accepted] option to the icml2012
% package.
\icmlauthor{Orly Avner}{orlyka@tx.technion.ac.il}
%\icmladdress{Department of Electrical Engineering, Technion}
\icmlauthor{Shie Mannor}{shie@ee.technion.ac.il}
\icmladdress{Department of Electrical Engineering, Technion}
\icmlauthor{Ohad Shamir}{ohadsh@microsoft.com}
\icmladdress{Microsoft Research New England}

% You may provide any keywords that you
% find helpful for describing your paper; these are used to populate
% the "keywords" metadata in the PDF but will not be shown in the document
%\icmlkeywords{boring formatting information, machine learning, ICML}

\vskip 0.3in
]

\begin{abstract}
We consider a multi-armed bandit problem where the decision maker can explore and exploit different arms at every round. The exploited arm adds to the decision maker's cumulative reward (without necessarily observing the reward) while the explored arm reveals its value. We devise algorithms for this setup and show that the dependence on the number of arms, $k$, can be much better than the standard $\sqrt{k}$ dependence, depending on the behavior of the arms' reward sequences. For the important case of piecewise stationary stochastic bandits, we show a significant improvement over existing algorithms. Our algorithms are based on a non-uniform sampling policy, which we show is essential to the success of {\em any} algorithm in the adversarial setup. Finally, we show some simulation results on an ultra-wide band channel selection inspired setting indicating the applicability of our algorithms.
\end{abstract}

%\textbf{\textcolor{red}{TODO:
 %   \begin{itemize}
  %  \item Write Abstract + Polish text + stuff in bold red font
 %   \item Experiments
 %   \item Change algorithm pseudocode to accord with experiments (what was set heuristically and what not)
%    \end{itemize}
%}}

\section{Introduction}

Multi-armed bandits have long been a canonical framework for studying online learning under partial information constraints. In this framework, a learner has to repeatedly obtain rewards by choosing from a fixed set of $k$ actions (arms), and gets to see only the reward of the chosen action. The goal of the learner is to minimize regret, namely the difference between her own cumulative reward and the cumulative reward of the best single action in hindsight. We focus here on algorithms suited for adversarial settings, which have reasonable regret even without any stochastic assumptions on the reward generating process.

A central theme in multi-armed bandits is the \emph{exploration-exploitation tradeoff}: The learner must choose highly-rewarding actions most of the time in order to minimize regret, but also needs to do some exploration in order to determine which actions to choose. Ultimately, the tradeoff comes from the assumption that the learner is constrained to observe only the reward of the action she picked.

While being a compelling and widely applicable framework, there exist several realistic bandit-like settings, which do not correspond to this fundamental assumption. For example, in ultra-wide band (UWB) communications, the decision maker, also called the ``secondary," has to decide in which channel to transmit and in what way. There are typically many possible channels (i.e., frequency bands) and several transmission methods (power, code used, modulation, etc.; see \cite{UWBBook}). In some UWB devices, the secondary can sense a different channel (or channels) than the one it currently uses for transmission. In fact, in some settings, the secondary cannot sense the channel it is currently transmitting in because of interference. The UWB environment is extremely noisy since it potentially contains many other sources, called ``primaries.'' Some of these sources are sources whose behavior (which channel they use, for how long, and in which power level) can be very hard to predict as they represent a mobile device using WiMAX, WiFI or some other communication protocol. It is therefore sensible to model the behavior of primaries as an adversarial process or a piecewise stationary process. We should mention that UWB networks are highly complex,  with many issues such as power constraints and multi-agency that have been considered in the multi-armed bandit framework \cite{LiuZhao10,AvnerMannor11,LaiJiangPoor08}, but the decoupling of sensing and transmission has not been considered to the best of our knowledge. More abstractly, our work relates to any bandit-like setting, where we are free to query the environment for some additional partial information, irrespective of our actual actions.

In such settings, the assumption that the learner can only observe the reward of the action she picked is an unnecessary constraint, and one might hope that removing this constraint and constructing suitable algorithms would allow better performance. We emphasize that this is far from obvious: In this paper, we will mostly focus on the case where the learner may query just a single action, so in some sense the learner gets the same ``amount of information'' per round as the standard bandit setting (i.e., the reward of a single action out of $k$ actions overall). The goal of this paper is to devise algorithms for this setting, and analyze theoretically and empirically whether the hope for improved performance is indeed justified. We emphasize that our results and techniques naturally generalize to cases where more than one action can be queried, and cases where the reward of the selected action is always revealed (see \secref{sec:discussion}).

Specifically, our contributions are the following:
\begin{itemize}
    \item We present a ``decoupled'' multi-armed bandit algorithm, which is suited to our setting. The algorithm is based on a certain querying distribution, which is adaptive and depends on the distribution by which the actions are actually picked. We show a ``data-dependent'' regret guarantee for the algorithm, which is never worse than that of standard bandit algorithms, and can be much better (in terms of dependence on the number of actions $k$), depending on how the actions' rewards behave.
    \item We prove that in certain settings (in particular, piecewise stochastic rewards), the decoupling assumption allows us to devise algorithms with significantly better performance than \emph{any} possible standard bandit algorithm.
    \item Our algorithms are based on a certain adaptive querying distribution, in contrast to previous works in the stochastic case where the querying distribution was uniform. We show that in some sense, such an adaptive policy is {\em necessary} in an adversarial setting, in order to get performance improvements  compared to standard bandit algorithms.
    \item We perform a preliminary experimental study, corroborating our theoretical findings and indicating that our algorithmic approach indeed leads to improved results, compared to standard approaches.
\end{itemize}

%Toggle for Arxiv/ICML version:
The proofs of our theorems are provided in the appendix of the full version \cite{AMSFullVersion12}.
%The proofs of our theorems are provided in the appendix.

\textbf{Related Work.} The idea of decoupling exploration and exploitation has appeared in a few previous works, but in different settings and contexts. For example, \cite{YuMa09} discuss a setting where the learner is allowed to query an additional action in a multi-armed bandit setting, but the focus there was on algorithms for stochastic bandits, as opposed to adversarial bandits as we do here. \cite{AgarwalDekelXiao10} study a bandit setting with (one or more) queries per round. However, they focus on the problem of bandit convex optimization, which is much more general than ours, and exploration and exploitation remains coupled in their framework. A different line of work (\cite{Even-DarMannorMansor06,audbumu10,bumust11}) considers multi-armed bandits in a stochastic setting, where the goal is to identify the best action by performing pure exploration. While this work also conceptually ``decouples'' exploration and exploitation, the goal and setting are quite different than ours.

\section{Problem Setting}

We use $[k]$ as shorthand for $\{1,\ldots,k\}$. Bold-face letters represent vectors, and $\mathbf{1}_{A}$ represents the indicator function for an event $A$. We use the standard big-Oh notation $\Ocal(\cdot)$ to hide constants, and $\tilde{\Ocal}(\cdot)$ to hide constants and logarithmic factors. For a distribution vector $\bp$ on the $k$-simplex, we use the notation
\[
\norm{\bp}_{1/2} = \left(\sum_{j=1}^{k}\sqrt{p_j}\right)^2
\]
to describe the `$\ell_{1/2}$'-norm of the distribution. It is straightforward to show that for a distribution vector, this quantity is always in $[1,k]$. In particular, it is $k$ for the uniform distribution, and gets smaller the more non-uniform the distribution is, attaining the value of $1$ when $\bp$ is a unit vector.

Our setting is a variant of the standard adversarial multi-armed bandit framework, focusing (for simplicity) on an oblivious adversary and a fixed horizon. In this setting, we have a fixed set of $k>1$ actions and a fixed known number of rounds $T$. Each action $i$ at each round $t$ has an unknown associated reward $g_i(t)\in[0,1]$. At each round, a learner chooses one of the actions $i_t$, and obtains the associated reward $g_{i_t}(t)$. The basic goal in this setting is to minimize the \emph{regret} with respect to the best single action in hindsight, namely
\[
\max_{i}\sum_{t=1}^{T}g_i(t)-\sum_{t=1}^{T}g_{i_t}(t).
\]
Unless specified otherwise, we make no assumptions on how the rewards $g_i(t)$ are generated (other than boundedness), and they might even be generated adversarially by an agent with full knowledge of our algorithm. However, we assume that the rewards are fixed in advance and do not depend on the learner's (possibly random) choices in previous rounds.

In standard multi-armed bandits, at the end of each round, the learner only gets to know the reward $g_{i_t}(t)$ of the action $i_t$ which was actually picked, but not the reward of other actions. Instead, in this paper we focus on a different setting, where the learner, after choosing an action $i_t$, may \emph{query} a single action $j_t$ and get to see its associated reward $g_{j_t}(t)$. This setting is a (slight) relaxation of the standard bandit setting, since we can always query $j_t=i_t$. However, here it is possible to query an action different than $i_t$. We emphasize that the regret is still measured with respect to the chosen actions $i_t$, and the querying only has informational value. In order to compare our results with those obtainable in the standard setting, we will use the term \emph{standard bandit algorithm} to refer to algorithms which are not free to query rewards, and are limited to receiving the reward of the chosen action. A typical example is the EXP3.P \cite{AuerCesFrSc02}, with a $\tilde{\Ocal}(\sqrt{kT})$ regret upper bound, holding with high probability, or the Implicitly Normalized Forecaster of \cite{AudBub09} with $\Ocal(\sqrt{kT})$ regret.

An interesting variant of our setting is when the learner gets to query more than one action, or gets to see $g_{i_t}(t)$ on top of $g_{j_t}(t)$. Such variants are further discussed in \secref{sec:discussion}.

\section{Basic Algorithm and Results}

In analyzing our ``decoupled'' setting, perhaps the first question one might ask is whether one can \emph{always} get improved regret performance, compared to the standard bandit setting. Namely, that for any reward assignment, the attainable regret will always be significantly smaller. Unfortunately, this is not the case: It can be shown that there exists an adversarial strategy such that the regret of standard bandit algorithms is $\tilde{\Theta}(\sqrt{kT})$, whereas the regret of any ``decoupled'' algorithm will be\footnote{One simply needs to consider the strategy used to obtain the
 $\Omega(\sqrt{kT})$ regret lower bound in the standard bandit setting \cite{AuerCesFrSc02}. The lower bound proof can be shown to apply to a ``decoupled'' algorithm as well. Intuitively, this is because the hardness for the learner stems from distinguishing slightly different distributions based on at most $T$ samples, which has nothing to do with the coupling constraint.} $\Omega(\sqrt{kT})$. Therefore, one cannot hope to always obtain better performance. However, as we will soon show, this can be obtained under certain realistic conditions on the actions' rewards.

We now turn to present our first algorithm (\algref{alg:bandits} below) and the associated regret analysis. The algorithm is rather similar in structure to standard bandit algorithms, picking actions at random in each round $t$ according to a weighted distribution $\bp(t)$ which is updated multiplicatively. The main difference is in determining how to query the reward. Here, the queried action is picked at random, according to a query distribution $\bq(t)$ which is based on but not identical to $\bp(t)$. More particularly, the queried action $j_t$ is chosen with probability
\begin{equation}\label{eq:qdist}
q_{j_t}(t)=\frac{\sqrt{p_{j_t}(t)}}{\sum_{j=1}^{k}\sqrt{p_{j}(t)}}.
\end{equation}
Roughly speaking, this distribution can be seen as a ``geometric average'' between $\bp(t)$ and a uniform distribution over the $k$ actions. See \algref{alg:bandits} for the precise pseudocode.

\begin{algorithm}
\caption{Decoupled MAB Algorithm}
\label{alg:bandits}
\begin{algorithmic}
\STATE \textbf{Input:} Step size parameter $\mu\in [1,k]$, confidence parameter $\delta\in (0,1)$
\STATE Let $\eta=1/\sqrt{\mu T}$, $\beta= 2\eta\sqrt{6\log(3k/\delta)}$ and $\gamma = \eta^2(1+\beta)^2 k^2$
\STATE $\forall~j\in [k]$ let $w_j(1)=1$.
\FOR{$t=1,\ldots,T$}
    \STATE $\forall~j\in [k]$, let $p_j(t)=(1-\gamma)\frac{w_j(t)}{\sum_{l=1}^{k}w_l(t)}+\frac{\gamma}{k}$
    \STATE Choose action $i_t$ with probability $p_{i_t}(t)$
    \STATE Query reward $g_{j_t}(t)$ with probability \\
    $~~~~~~~~~~~
    q_{j_t}(t)=\frac{\sqrt{p_{j_t}(t)}}{\sum_{j}\sqrt{p_{j}(t)}}
    $
    \STATE $\forall~j\in[k]$, let $\tilde{g}_{j}(t)=\frac{1}{q_{j}(t)}\left(g_{j}(t)\mathbf{1}_{j_t=j}+\beta\right)$
    \STATE $\forall~j\in [k]$, let $w_j(t+1)= w_j(t)\exp(\eta \tilde{g}_j(t))$
\ENDFOR
\end{algorithmic}
\end{algorithm}

Readers familiar with bandit algorithms might notice the existence of the common ``exploration component'' $\gamma/k$ in the definition of $p_j(t)$. In standard bandit algorithm, this is used to force the algorithm to explore all arms to some extent. In our setting, exploration is performed via the separate query distribution $q_j(t)$, and in fact, this $\gamma/k$ term can be inserted into the $q_j(t)$ definition instead. While this would be more aesthetically pleasing , it also seems to make our proofs and results more complicated, without substantially improving performance. Therefore, we will stick with this formulation.

Before discussing the formal theoretical results, we would like to briefly explain the intuition behind this querying distribution. Most bandit algorithms (including ours) build upon a standard multiplicative updates approach, which updates the distribution $\mathbf{p}(t)$ multiplicatively based on each action's rewards. In the bandit setting, we only get partial information on the rewards, and therefore resort to multiplicative updates based on an unbiased estimate of them. The key quantity which controls the regret is the variance of these estimates, in expectation over the action distribution $\mathbf{p}(t)$. In our case, this quantity turns out to be on the order of $\sum_{j=1}^{k}p_j(t)/q_j(t)$. Now, standard bandit algorithms, which may not query at will, are essentially constrained to have $q_j(t)=p_j(t)$, leading to an expected variance of $k$ and hence the $k$ in their $\tilde{\Ocal}(\sqrt{kT})$ regret bound. However, in our case, we are free to pick the querying distribution $\bq(t)$ as we wish. It is not hard to verify that $\sum_{j=1}^{k}p_j(t)/q_j(t)$ is minimized by choosing $\bq(t)$ as in \eqref{eq:qdist}, with the value of $\norm{\bp(t)}_{1/2}$. Thus, roughly speaking, instead of dependence on $k$, we get a dependence on $\frac{1}{T}\sum_{t=1}^{T}\norm{\bp(t)}_{1/2}$, as will be seen shortly.

The theoretical analysis of our algorithm relies on the following technical quantity: For any algorithm parameter choices $\mu,\delta$, and for any $v\in [1,k]$, define
\[
P(v,\delta,\mu) =  \Pr\left(\frac{1}{T}\sum_{t=1}^{T}\norm{\bp(t)}_{1/2}>v\right),
\]
where the probability is over the algorithm's randomness, run with parameters $\mu,\delta$, with respect to the (fixed) reward sequence. The formal result we obtain is the following:
\begin{theorem}\label{thm:mab}
Suppose that $T$ is sufficiently large (and thus $\eta$ and $\beta$ sufficiently small) so that $(1+\beta)^2\leq 2$. Then for any $v\in [1,k]$, it holds that with probability at least $1-\delta-P(v,\delta,\mu)$ that the sequence of rewards $g_{i_1}(1),\ldots,g_{i_T}(T)$ returned by Algorithm \ref{alg:bandits} satisfies
\begin{align*}
&\max_{i}\sum_{t=1}^{T}g_i(t)-\sum_{t=1}^{T}g_{i_t}(t)\\
&\;\;
\leq~\tilde{\Ocal}\left(\sqrt{\left(\frac{v^2}{\mu}+\mu+v\right)T}+\frac{k^2}{\mu}
+\frac{k^2}{T^{3/2}}\right)
%\leq~ \sqrt{T}\left(8\sqrt{6\log\left(\frac{3k}{\delta}\right)}
%\frac{v}{\sqrt{\mu}}+\right.\\
%&\;\;\; \left.\left(\sqrt{\frac{1}{24}\log\left(\frac{3k}{\delta}\right)}
%+\log(k)\right)\sqrt{\mu}+2\sqrt{5\log(3/\delta)v}\right)
%\\&~~~+ 2\frac{k^2}{\mu}+
%\tilde{\Ocal}\left(\sqrt{k}+\frac{k}{\sqrt{T}} + %\frac{k^2}{T^{3/2}}\right)\\
%&\leq~
\end{align*}
where the $\tilde{\Ocal}$ notation hides numerical constants and factors logarithmic in $k$ and $\delta$.
\end{theorem}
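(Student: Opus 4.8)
The plan is to follow the template of the EXP3.P analysis of \cite{AuerCesFrSc02}, with the single modification that the queried action is drawn from $\bq(t)$ rather than from $\bp(t)$; this is precisely the modification that replaces the usual dependence on $k$ by a dependence on $\norm{\bp(t)}_{1/2}$. Write $W_t=\sum_j w_j(t)$ and $r_j(t)=w_j(t)/W_t$ for the ``un-mixed'' weights, so $p_j(t)=(1-\gamma)r_j(t)+\gamma/k$. The first thing I would record are the elementary identities implied by $q_j(t)=\sqrt{p_j(t)}/\sum_l\sqrt{p_l(t)}$: for every $j$, $p_j(t)/q_j(t)^2=\norm{\bp(t)}_{1/2}$, and $\sum_j p_j(t)/q_j(t)=\norm{\bp(t)}_{1/2}$, and $1/q_j(t)\le\sqrt{k}/\sqrt{p_j(t)}\le k/\sqrt\gamma=1/(\eta(1+\beta))$. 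The last bound shows that $\gamma$ is calibrated so that $\eta\tilde g_j(t)\le 1$ for all $j,t$, which is what lets one use $e^{x}\le 1+x+x^2$ in the potential step. By Azuma--Hoeffding on the martingale with increments $\sum_j p_j(t)g_j(t)-g_{i_t}(t)\in[-1,1]$, with probability $\ge 1-\delta/3$ one has $\sum_t g_{i_t}(t)\ge\sum_t\sum_j p_j(t)g_j(t)-\tilde{\Ocal}(\sqrt T)$; since $\sum_j p_j(t)g_j(t)\ge(1-\gamma)\sum_j r_j(t)g_j(t)$ and $\sum_j r_j(t)g_j(t)\le 1$, the regret is at most $\big(\max_i\sum_t g_i(t)-\sum_t\sum_j r_j(t)g_j(t)\big)+\gamma T+\tilde{\Ocal}(\sqrt T)$, where $\gamma T=(1+\beta)^2k^2/\mu\le 2k^2/\mu$. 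It remains to bound this multiplicative-weights pseudo-regret, and the standard potential-function bound on $\ln(W_{T+1}/W_1)$ gives, for every fixed $i$,
\[
\sum_t\tilde g_i(t)-\sum_t\sum_j r_j(t)\tilde g_j(t)\ \le\ \frac{\ln k}{\eta}+\eta\sum_t\sum_j r_j(t)\tilde g_j(t)^2,
\]
and, expanding $\tilde g_j(t)^2$ and using $r_j(t)\le p_j(t)/(1-\gamma)$, the identities above, and $(1+\beta)^2\le2$, the variance term is at most $\eta(4+2\beta^2 k)\sum_t\norm{\bp(t)}_{1/2}$.

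Next I would establish three high-probability facts relating the estimates $\tilde g$ to the true rewards. (i) A one-sided deviation bound for $\sum_t\tilde g_i(t)$, uniform in $i$. This is the delicate step: the per-round conditional variance of $\tilde g_i(t)$ is of order $1/q_i(t)$, which can be as large as $1/\eta=\sqrt{\mu T}$, so a black-box Bernstein/Freedman bound is far too weak. Instead, following EXP3.P, set $U_t=g_i(t)-g_i(t)\mathbf 1_{j_t=i}/q_i(t)$, a martingale difference with $U_t\le 1$ and $\E[U_t^2\mid\mathcal F_{t-1}]\le 1/q_i(t)$; since $\beta$ is small, $\E[e^{\beta U_t}\mid\mathcal F_{t-1}]\le\exp(\beta^2/q_i(t))$, so $\exp\big(\beta\sum_{s\le t}U_s-\beta^2\sum_{s\le t}1/q_i(s)\big)$ is a supermartingale. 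Markov's inequality plus a union bound over $i$ then yields, with probability $\ge1-\delta/3$, $\sum_t\tilde g_i(t)\ge\sum_t g_i(t)-\ln(3k/\delta)/\beta$ for all $i$: the optimistic $\beta$-bias built into $\tilde g_i(t)$ is exactly the right size to cancel the large variance proxy $\beta\sum_t 1/q_i(t)$, and $\ln(3k/\delta)/\beta=\tilde{\Ocal}(1/\eta)=\tilde{\Ocal}(\sqrt{\mu T})$. (ii) A deviation bound for $\sum_t\sum_j r_j(t)\tilde g_j(t)$ around its conditional mean $\sum_t\big(\sum_j r_j(t)g_j(t)+\beta\sum_j r_j(t)/q_j(t)\big)$, where $\sum_j r_j(t)/q_j(t)\le2\norm{\bp(t)}_{1/2}$; here the conditional variance is at most $2\norm{\bp(t)}_{1/2}$ and the increments are $\Ocal(\sqrt k)$ (since $p_{j_t}(t)/q_{j_t}(t)\le\sqrt{p_{j_t}(t)}\sum_l\sqrt{p_l(t)}\le\sqrt k$), so a variance-adaptive Freedman-type inequality --- with a union bound over dyadic scales of the realized sum of conditional variances, so that it stays compatible with the event defining $P(v,\delta,\mu)$ --- gives a deviation at most $\tilde{\Ocal}\big(\sqrt{\sum_t\norm{\bp(t)}_{1/2}}+\sqrt k\big)$ with probability $\ge 1-\delta/3$. (iii) The Azuma bound already used in the decomposition.

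Finally I would work on the intersection of the three good events with the event $\{\tfrac1T\sum_t\norm{\bp(t)}_{1/2}\le v\}$, whose probability is at least $1-\delta-P(v,\delta,\mu)$. Chaining the potential inequality with (i) and (ii) and then adding back the $\gamma T+\tilde{\Ocal}(\sqrt T)$ from the decomposition, the regret is bounded by
\[
\tilde{\Ocal}\!\Big(\tfrac1\eta+\tfrac1\beta\Big)+(\eta+\beta)\sum_t\norm{\bp(t)}_{1/2}+\tilde{\Ocal}\!\Big(\sqrt{\textstyle\sum_t\norm{\bp(t)}_{1/2}}\Big)+\tilde{\Ocal}\!\Big(\tfrac{k^2}{\mu}+\tfrac{k^2}{T^{3/2}}\Big).
\]
On the event $\sum_t\norm{\bp(t)}_{1/2}\le vT$, so $(\eta+\beta)\sum_t\norm{\bp(t)}_{1/2}=\tilde{\Ocal}(vT/\sqrt{\mu T})=\tilde{\Ocal}(\sqrt{v^2T/\mu})$ and $\sqrt{\sum_t\norm{\bp(t)}_{1/2}}\le\sqrt{vT}$, while $1/\eta,1/\beta=\tilde{\Ocal}(\sqrt{\mu T})$; the contributions involving $\beta^2 k$, $\sqrt k$ and $\gamma$ (e.g.\ $\eta\beta^2 k\,vT$ and $\gamma$-related corrections) collapse into the $\Ocal(k^2/\mu)$ and $\tilde{\Ocal}(k^2/T^{3/2})$ terms once $(1+\beta)^2\le2$ and $\eta=1/\sqrt{\mu T}$ are substituted. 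Using $\sqrt a+\sqrt b+\sqrt c\le\sqrt{3(a+b+c)}$ on the three square-root terms then yields the stated bound.

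I expect step (i) to be the main obstacle: one cannot invoke an off-the-shelf martingale concentration inequality there, and must instead replay the EXP3.P exponential-supermartingale argument, verifying that the bias parameter $\beta$ is exactly the size needed to neutralize the (large) variance proxy $1/q_i(t)$. A secondary difficulty is that the variance in step (ii) is data-dependent --- it scales with the random quantity $\norm{\bp(t)}_{1/2}$ --- so the concentration there must be variance-adaptive in order to interface cleanly with the quantity $P(v,\delta,\mu)$ appearing in the statement.
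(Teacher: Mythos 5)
Your proposal is correct and follows essentially the same route as the paper's proof: the same EXP3.P-style potential argument with the query distribution $\bq(t)$ yielding the $\norm{\bp(t)}_{1/2}$ variance, the same exponential-supermartingale lemma showing the $\beta$-bias cancels the $1/q_i(t)$ variance proxy (the paper's Lemma~\ref{lem:g}), a Freedman-type bound interfaced with the event defining $P(v,\delta,\mu)$ (the paper conditions on the variance event directly rather than peeling dyadically), the same Azuma step, and the same final substitution of $\eta,\beta,\gamma$. The only deviations are cosmetic (working with the un-mixed weights and a pointwise bound on the quadratic term instead of a second Freedman application), so no further comment is needed.
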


At this point, the nature of this result might seem a bit cryptic. We will soon provide more concrete examples, but would like to give a brief general intuition. First of all, if we pick $\mu=v=k$, then $P(v,\delta,\mu)=0$ always (as $\norm{\bp(t)}_{1/2}\leq k$), and the bound becomes $\tilde{\Ocal}(\sqrt{kT})$, holding with probability $1-\delta$, similar to standard multi-armed bandit guarantees. This shows that our algorithm's regret guarantee is \emph{never} worse than that of standard bandit algorithms. However, the theorem also implies that under certain conditions, the resulting bound may be significantly better. For example, if we run the algorithm with $\mu=1$ and have $v=\Ocal(1)$, then the bound becomes $\tilde{\Ocal}\left(\sqrt{T}\right)$ for sufficiently large $T$. This bound is meaningful only if $P(\Ocal(1),\delta,1)$ is reasonably small. This would happen if the distribution vectors $\bp(t)$ chosen by the algorithm tend to be highly non-uniform, since it leads to a small value for $\frac{1}{T}\sum_{t=1}^{T}\norm{\bp(t)}_{1/2}$.

%Note that $P(v,\delta,\mu)$ is monotonically decreasing in $v$,
%and that $P(k,\delta,\mu)=0$ always (since $\norm{\bp(t)}_{1/2}\leq k$ for any distribution vector $\bp(t)$). Moreover, $\norm{\bp(t)}$ equals $k$ only when $\bp(t)$ is the uniform distribution, and gets smaller the more non-uniform is $\bp(t)$. Therefore, if the reward sequence is such that $\bp(t)$ tends to be highly non-uniform (e.g., place most of its weight on a small set of actions), then we may hope that $P(v,\delta,\mu)$ is small already with much smaller values of $v$. Such a situation will lead to an improved regret bound, as we now turn to formally show.

%If we pick $\mu=v=k$, then the bound becomes $\tilde{\Ocal}(\sqrt{kT})$, holding with probability $1-\delta$, similar to standard multi-armed bandit guarantees.

We now turn to provide a concrete scenario, where the bound we obtain is better than those obtained by standard bandit algorithms. Informally, the scenario we discuss assumes that although there are $k$ actions, where $k$ is possibly large, only a small number of them are actually ``relevant'' and have a performance close to that of the best action in hindsight. Intuitively, such cases would lead to the distribution vectors $\bp(t)$ to be non-uniform, which is favorable to our analysis.

\begin{theorem}\label{thm:nonuniformupperbound}
Suppose that the reward of each action is chosen i.i.d. from a distribution supported on $[0,1]$. Furthermore, suppose that there exist a subset $G\subset [k]$ of actions and a parameter $\Delta>0$ (where $|G|,\Delta$ are considered constants independent of $k,T$), such that  the expected reward of any action in $G$ is larger than the expected reward of any action in $[k]\setminus G$ by at least $\Delta$.
Then if we run our algorithm with
\[
\mu = k^{\min\left\{1,~\max\left\{0,~\frac{4}{3}-\frac{1}{3}\log_k(T)
\right\}\right\}},
\]
it holds with probability at least $1-\delta$ that the regret of the algorithm is at most
\[
\tilde{\Ocal}\left(\sqrt{k^{\max\{0,\frac{4}3-\frac{1}{3}\log_k(T)\}}T}
\right),
\]
where the $\tilde{\Ocal}$ notation hides numerical constants and factors logarithmic in $\delta,k$.
\end{theorem}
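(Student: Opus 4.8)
The plan is to derive Theorem~\ref{thm:nonuniformupperbound} from Theorem~\ref{thm:mab}: the whole task is to choose the free parameter $v$ so that the failure probability $P(v,\delta,\mu)$ is small, i.e.\ to show that in the stochastic gap setting the play distributions $\bp(t)$ are, averaged over $t$, concentrated on the $|G|=\Ocal(1)$ good arms. Write $\alpha:=\max\{0,\tfrac43-\tfrac13\log_k T\}$, so the prescribed step size is $\mu=k^{\min\{1,\alpha\}}$, giving $\eta=1/\sqrt{\mu T}$ and $\gamma=\Theta(k^2/(\mu T))$; note $\mu T=k^{4/3}T^{2/3}=k^{\alpha}T$ throughout the range $k\le T\le k^{4}$ (where $\alpha\in[0,1]$), and that $\log T=\Ocal(\log k)$ there. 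I will show that with probability at least $1-\delta$, $\tfrac1T\sum_{t=1}^{T}\norm{\bp(t)}_{1/2}\le v$ with $v=\tilde{\Ocal}(|G|+\gamma k)$, where the $\tilde{\Ocal}$ here also hides a factor depending only on the constant $\Delta$. Since in that range $v=\tilde{\Ocal}(|G|+\gamma k)\le\tilde{\Ocal}(\mu)$ (as $\gamma k=\Theta(k^{5/3}/T^{2/3})\le\mu$ when $T\ge k$), feeding this $v$ and $\mu$ into Theorem~\ref{thm:mab} makes each of $v^{2}/\mu,\mu,v$ be $\tilde{\Ocal}(\mu)$ and each of $k^{2}/\mu,k^{2}/T^{3/2}$ be $\tilde{\Ocal}(\sqrt{\mu T})$, so the bound collapses to $\tilde{\Ocal}(\sqrt{\mu T})=\tilde{\Ocal}(\sqrt{k^{\alpha}T})$. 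For $T>k^{4}$ the prescription gives $\mu=1$, $\gamma k=\Ocal(1/k)$, $v=\tilde{\Ocal}(1)$ and the bound is $\tilde{\Ocal}(\sqrt{T})$; for $T\le k$ the claimed bound already exceeds the trivial regret bound $T$, so there is nothing to prove.

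\textbf{Reduction to a ``burn-in'' lemma.} The core claim is: with probability at least $1-\delta$ there is a round $\tau=\tilde{\Ocal}(\sqrt{\mu T}/\Delta)$ such that $p_j(t)\le C_0\gamma/k$ for every $t\ge\tau$ and every suboptimal arm $j\notin G$, with $C_0=\mathrm{polylog}(k,T,1/\delta)$ (a constant once $k$ is large relative to $|G|,\Delta,\log(1/\delta)$). Granting it, Cauchy--Schwarz gives, for $t\ge\tau$, $\sum_{j\notin G}\sqrt{p_j(t)}\le\sqrt{(k-|G|)\sum_{j\notin G}p_j(t)}\le\sqrt{C_0\gamma k}$, hence $\norm{\bp(t)}_{1/2}=\big(\sum_{j}\sqrt{p_j(t)}\big)^{2}\le\big(\sqrt{|G|}+\sqrt{C_0\gamma k}\big)^{2}=\Ocal(|G|+C_0\gamma k)$; for $t<\tau$ use the trivial $\norm{\bp(t)}_{1/2}\le k$. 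Averaging, $\tfrac1T\sum_{t}\norm{\bp(t)}_{1/2}\le\tfrac{\tau}{T}k+\Ocal(|G|+C_0\gamma k)$, and a short computation with the prescribed $\mu$ shows $\tfrac{\tau}{T}k=\tilde{\Ocal}(\gamma k)$, giving the $v$ above.

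\textbf{Proving the burn-in lemma.} Fix any reference arm $i^{\star}\in G$. From $p_j(t)=(1-\gamma)\frac{w_j(t)}{\sum_{l}w_l(t)}+\frac{\gamma}{k}\le\frac{w_j(t)}{w_{i^{\star}}(t)}+\frac{\gamma}{k}$, it suffices to show $\log\frac{w_j(t)}{w_{i^{\star}}(t)}=\eta\sum_{s<t}\big(\tilde g_j(s)-\tilde g_{i^{\star}}(s)\big)\le\log\big((C_0-1)\gamma/k\big)$ for all $t\ge\tau$ and $j\notin G$. Split each increment into its conditional mean $\eta\big[(g_j(s)-g_{i^{\star}}(s))+\beta(\tfrac{1}{q_j(s)}-\tfrac{1}{q_{i^{\star}}(s)})\big]$ plus a martingale difference. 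For the reward part, the i.i.d.\ hypothesis with the $\Delta$-gap and a (time-uniform) Hoeffding bound, union over the $\le k^{2}$ ordered pairs, gives $\sum_{s<t}(g_j(s)-g_{i^{\star}}(s))\le-(t-1)\Delta/2$ once $t-1=\tilde{\Omega}(1/\Delta^{2})$. For the bias part, $\tfrac1{q_j(s)}=\tfrac{\sum_{l}\sqrt{p_l(s)}}{\sqrt{p_j(s)}}$; while $p_j(s)\ge\gamma$, $\beta/q_j(s)\le\beta\sqrt{k/\gamma}=\Ocal(\sqrt{\log(k/\delta)/k})$, and once the mass off $G$ is $\Ocal(\gamma)$ we have $\sum_{l}\sqrt{p_l(s)}=\Ocal(1+\sqrt{\gamma k})$, whence $\beta/q_j(s)=\Ocal(\sqrt{\log(k/\delta)/k}+\beta k)$ with $\beta k=\Ocal(\sqrt{\log(k/\delta)}\,(k/T)^{1/3})$. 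Provided $T$ exceeds $k$ by more than a $\mathrm{polylog}(k/\delta)/\Delta^{3}$ factor this bias is $o(\Delta)$; otherwise $v$ may as well be taken to be $k$ (so $P(v,\delta,\mu)=0$), and Theorem~\ref{thm:mab} still gives $\tilde{\Ocal}(\sqrt{k^{2-\alpha}T})=\tilde{\Ocal}(\sqrt{k^{\alpha}T})$ because $\alpha\to1$ there. A Freedman/Bernstein inequality handles the martingale part: $0\le\tilde g_j(s)\le1/\eta$ and $\var(\tilde g_j(s)\mid\Fcal_{s-1})=\Ocal(1/\eta)$, so its contribution over $t$ rounds, times $\eta$, is $\tilde{\Ocal}(\sqrt{(t-1)\eta}+\eta)$, of lower order than $\eta(t-1)\Delta$. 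Summing, $\eta\sum_{s<t}(\tilde g_j(s)-\tilde g_{i^{\star}}(s))\le-\tfrac13\eta(t-1)\Delta+\tilde{\Ocal}(1)$, which falls below $\log((C_0-1)\gamma/k)=-\Theta(\log(\mu T))$ once $t-1\ge\tau=\tilde{\Ocal}(\sqrt{\mu T}/\Delta)$; a union bound over the $\le k$ arms and $\le T$ rounds costs a further $\Ocal(\log k)$ factor.

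\textbf{Main obstacle.} The delicate point is controlling the bias $\beta/q_j(s)$ together with the heavy tails of $\tilde g_j$: the crude $q_j(s)\ge\eta$ inflates the per-round bias to $\Ocal(\sqrt{\log(k/\delta)})$, which would swamp the $\Delta$-sized drift. The resolution is that a large bias can occur only when $p_j(s)$ is at (or near) its floor $\gamma/k$, which is exactly when $w_j/w_{i^{\star}}$ is already tiny; so one tracks the process in phases (while $p_j(s)\ge\gamma$ versus after), ties the martingale variance bound to the same case split, and separately disposes of the small-horizon regime where the target is already the standard $\tilde{\Ocal}(\sqrt{kT})$. This is also where the i.i.d.\ assumption is genuinely used---to turn the $\Delta$-gap into a negative drift. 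The remaining ingredients (the Hoeffding union bound, the Freedman inequality, and the arithmetic collapsing Theorem~\ref{thm:mab} to the exponent $k^{\alpha}$) are routine.
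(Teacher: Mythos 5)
Your high-level architecture is the same as the paper's: certify a value $v$ bounding $\frac{1}{T}\sum_{t}\norm{\bp(t)}_{1/2}$ with high probability, feed it into \thmref{thm:mab} together with the prescribed $\mu$, and check the arithmetic collapses to $\tilde{\Ocal}(\sqrt{k^{\max\{0,4/3-\frac13\log_k T\}}T})$ (your final exponent bookkeeping and the $T\leq k$ / $T\geq k^4$ edge cases match the paper). The genuine difference — and the genuine gap — is how $v$ is certified. Your route rests entirely on the ``burn-in lemma'' ($p_j(t)\leq C_0\gamma/k$ for all $j\notin G$ and all $t\geq\tau$), and the sketch does not actually close the step you yourself flag as the main obstacle. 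Concretely: (i) the bound $\beta/q_j(s)=\Ocal(\sqrt{\log(k/\delta)/k}+\beta k)$ is obtained by assuming the mass off $G$ is already $\Ocal(\gamma)$, i.e.\ it presupposes (for all bad arms simultaneously) the very conclusion being proved; in the window $\gamma/k\leq p_j(s)\lesssim 1/k$ the unconditional bound is only $\beta/q_j(s)\leq\beta/\eta=\Theta(\sqrt{\log(k/\delta)})\gg\Delta$, so you need a simultaneous induction / first-violation-time argument over all $k$ arms, which is not given; (ii) the claim is ``for every $t\geq\tau$,'' which requires controlling excursions of $\log(w_j/w_{i^\star})$ after it first crosses the threshold, where $\tilde g_j$ is heavy-tailed (a single observed query moves the log-ratio by up to $\eta/q_j$), again only gestured at; (iii) even granting the drift comparison, it only works when $k\gg\log(k/\delta)/\Delta^2$ and $T\gg k\cdot\mathrm{polylog}/\Delta^{3}$, and your fallback (``take $v=k$'') is argued only for the small-$T$ regime, not for small or moderate $k$ with large $T$ (that case is recoverable since $k^{\alpha}$ and $k$ then differ by hidden factors, but it needs to be said); a minor additional blemish is that the union bound over $T$ rounds and the Freedman applications introduce $\log T$ factors, which for $T\gg k^4$ are not covered by the theorem's ``logarithmic in $\delta,k$.'' So as written this is a plausible program, not a proof: the burn-in lemma is exactly where all the difficulty lives.

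The paper avoids analyzing the weight dynamics altogether, via a bootstrap of \thmref{thm:mab} itself: the proof of \thmref{thm:mab} in fact bounds the pseudo-regret $\max_i\sum_t g_i(t)-\sum_t\sum_j p_j(t)g_j(t)$ by the same quantity; in the stochastic-gap setting, Azuma's inequality (the rewards are i.i.d.\ and independent of $\bp(t)$, so $\sum_j p_j(t)\bigl((g_1(t)-g_j(t))-\E[g_1(t)-g_j(t)]\bigr)$ is a martingale difference) lower-bounds that pseudo-regret by $\Delta\sum_t\sum_{j\notin G}p_j(t)-\sqrt{2\log(1/\delta)T}$, which yields an upper bound on $\sum_t\sum_{j\notin G}p_j(t)$; the elementary inequality $\norm{\bp(t)}_{1/2}\leq 2|G|+k\sum_{j\notin G}p_j(t)$ then bounds $\frac1T\sum_t\norm{\bp(t)}_{1/2}$. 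Running this first with $v=k$ (so $P(v,\delta,\mu)=0$) gives an unconditional high-probability bound, which is then fed back as the new $v$ into \thmref{thm:mab}, and the $\mu$-optimization proceeds as in your last step. This gives a weaker, time-averaged control on $\norm{\bp(t)}_{1/2}$ (of order $k^2/\sqrt{\mu T}+k^3/\mu T$ rather than your pointwise $|G|+\gamma k$), but it suffices for the stated bound, needs no bias/excursion control, and uses only tools already present in the proof of \thmref{thm:mab}. If you want to salvage your route, the honest statement is that the burn-in lemma requires a careful stopping-time/induction argument of roughly the same length as the paper's entire proof; the bootstrap is the shortcut.
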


The bound we obtain interpolates between the usual $\tilde{\Ocal}(\sqrt{kT})$ bound obtained using a standard bandit algorithm, and a considerably better $\tilde{\Ocal}(\sqrt{T})$, as $T$ gets larger compared with $k$. We note that a mathematically equivalent form of the bound is
\[
\max\left\{\left(\frac{k}{T}\right)^{2/3},\left(\frac{1}{T}\right)^{1/2}\right\}T.
\]
Namely, the average per-round regret scales down as $(k/T)^{2/3}$, until $T$ is sufficiently large and we switch to a $(1/T)^{1/2}$ regime. In contrast, the bound for standard bandit algorithms is always of the form $(k/T)^{1/2}$, and the rate of regret decay is significantly slower.

We emphasize that although the setting discussed above is a stochastic one (where the rewards are chosen i.i.d.), our algorithm can cope simultaneously with arbitrary rewards, unlike algorithms designed specifically for  stochastic i.i.d. rewards (which do admit better dependence in $T$, although not necessarily in $k$).

Finally, we note in practice, the optimal choice of $\mu$ depends on the (unknown) rewards, and hence cannot be determined by the learner in advance. However, this can be resolved algorithmically by a standard doubling trick (cf. \cite{CesaBianchiLu06}), without materially affecting the regret guarantee. Roughly speaking, we can guess an upper bound $v$ on $\frac{1}{T}\sum_{t=1}^{T}\norm{\bp(t)}_{1/2}$ and pick $\mu=v$, and if the cumulative sum $\sum \norm{\bp(t)}_{1/2}$ eventually exceeds $Tv$ at some round, then we double $v$ and $\mu$ and restart the algorithm.

\section{Decoupling Provably Helps in some Adversarial Settings}

So far, we have seen how the bounds obtained for our approach are better than the ones known for standard bandit algorithms. However, this doesn't imply that our approach would indeed yield better performance in practice: it might be possible, for instance, that for the setting described in \thmref{thm:nonuniformupperbound}, one can provide a tighter analysis of standard bandit algorithms, and recover a similar result. In this section, we show that there are cases where decoupling provably helps, and our approach can provide performance provably better than any standard bandit algorithm, for information-theoretic reasons. We note that the idea of decoupling has been shown to be helpful in cases reminiscent of the one we will be discussing \cite{YuMa09}, but here we study it in the more general and challenging adversarial setting.

Instead of the plain-vanilla multi-armed bandit setting, we will discuss here a slightly more general setting, where our goal is not to achieve regret with respect to the best single action, but rather to the best sequence of $S>1$ actions. More specifically, we wish to obtain a regret bound of the form
\[
\max_{\stackrel{1=T_{1}\leq T_{2}\leq\ldots\leq T_{S+1}=T}{i^1,\ldots,i^S\in [k]}}
\sum_{s=1}^{S}\sum_{t=T_{s}+1}^{T_{s+1}}g_{i^s}(t)-\sum_{t=1}^{T}g_{i_t}(t).
\]
This setting is well-known in the online learning literature, and has been considered for instance in \cite{HerbsterWarmuth98} for full-information online learning (under the name of ``tracking the best expert'') and in \cite{AuerCesFrSc02} for the bandit setting (under the name of ``regret against arbitrary strategies'').

This setting is particularly suitable when the best action changes with time.
Intuitively, our decoupling approach helps here, since we can exploit much more aggressively while still performing reasonable exploration, which is important for detecting such changes.

The algorithm we use follows the lead of \cite{AuerCesFrSc02} and is presented as \algref{alg:banditsswitch}. The only difference compared to \algref{alg:bandits} is that the $w_j(t+1)$ parameters are computed differently. This change facilitates more aggressive exploration.

\begin{algorithm}
\caption{Decoupled MAB Algorithm For Switching}
\label{alg:banditsswitch}
\begin{algorithmic}
\STATE \textbf{Input:} Step size parameter $\mu\in [1,k]$, confidence parameter $\delta\in (0,1)$, number of switches $S$
\STATE Let $\eta=\sqrt{S/\mu T}$, $\alpha=1/T$, $\beta= 2\eta\sqrt{6\log(3k/\delta)}$ and $\gamma = \eta^2(1+\beta)^2 k^2$
\STATE $\forall~j\in [k]$ let  $w_j(1)=1$.
\FOR{$t=1,\ldots,T$}
    \STATE $\forall~j\in [k]$, let $p_j(t)=(1-\gamma)\frac{w_j(t)}{\sum_{l=1}^{k}w_l(k)}+\frac{\gamma}{k}$
    \STATE Choose action $i_t$ with probability $p_{i_t}(t)$
    \STATE Query reward $g_{j_t}(t)$ with probability $q_{j_t}(t)=\frac{\sqrt{p_{j_t}(t)}}{\sum_{j}\sqrt{p_{j}(t)}}$
    \STATE $\forall~j\in[k]$, let $\tilde{g}_{j}(t)=\frac{1}{q_{j}(t)}\left(g_{j}(t)\mathbf{1}_{j_t=j}+\beta\right)$
    \STATE $\forall~j\in [k]$, let $w_j(t+1)= w_j(t)\exp(\eta \tilde{g}_j(t))+\frac{e\alpha}{k}\sum_{i=1}^{T}w_i(t)$
\ENDFOR
\end{algorithmic}
\end{algorithm}

The following theorem, which is proven along similar lines to \thmref{thm:mab}, shows that in this setting as well, we get the same kind of dependence on the distribution vectors $\bp(t)$ as in the standard bandit setting.

\begin{theorem}\label{thm:mabswitch}
Suppose that $T$ is sufficiently large (and thus $\eta$ and $\beta$ sufficiently small) so that $(1+\beta)^2\leq 2$. Then for any $v\in [1,k]$, it holds that with probability at least $1-\delta-P(v,\delta,\mu)$ that the sequence of rewards $g_{i_1}(1),\ldots,g_{i_T}(T)$ returned by algorithm \ref{alg:banditsswitch} satisfies the following, simultaneously over all segmentations of $\{1,\ldots,T\}$ to $S$ epochs and a choice of action $i^s$ to each epoch:
\begin{align*}
&\sum_{s=1}^{S}\sum_{t=T_{s}+1}^{T_{s+1}}g_{i^s}(t)-\sum_{t=1}^{T}g_{i_t}(t)\\
%&\leq~ \sqrt{ST}\left(8\sqrt{6\log\left(\frac{3k}{\delta}\right)}
%\frac{v}{\sqrt{\mu}}+\left(\sqrt{\frac{1}{24}\log\left(\frac{3k}{\delta}\right)}
%+\log(kT)+e\right)\sqrt{\mu}+2\sqrt{5\log(3/\delta)v}\right)
%\\&~~~+ 2S\frac{k^2}{\mu}+
%\tilde{\Ocal}\left(\sqrt{k}+\frac{k}{\sqrt{T}} + %\frac{k^2}{T^{3/2}}\right)\\
&\;\;\leq~ \tilde{\Ocal}\left(\sqrt{S\left(\frac{v^2}{\mu}+\mu+v\right)T}+\frac{k^2}{\mu}
+\frac{k^2}{T^{3/2}}\right).\\
\end{align*}
The $\tilde{\Ocal}$ notation hides numerical constants and factors logarithmic in $k$ and $\delta$.
\end{theorem}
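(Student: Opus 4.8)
The plan is to mirror the proof of \thmref{thm:mab} — a weighted-majority / EXP3.P-style potential argument whose regret is controlled by the query-induced variance $\norm{\bp(t)}_{1/2}$ rather than by $k$ — while grafting on the ``fixed-share'' trick of \cite{AuerCesFrSc02} that is responsible for the switching guarantee. Let $W_t=\sum_{j=1}^{k}w_j(t)$ be the total weight. As usual I would bound $\log(W_{T+1}/W_1)$ from above in terms of the algorithm's own reward proxy and a variance term, bound it from below in terms of an arbitrary $S$-segment comparator, and then combine the two.

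For the \emph{upper bound}, note first that $p_j(t)\ge\gamma/k$ together with $\sum_l\sqrt{p_l(t)}\le\sqrt{k}$ gives $q_j(t)\ge\sqrt{\gamma}/k$, whence $\eta\tilde{g}_j(t)\le\eta(1+\beta)k/\sqrt{\gamma}=1$ by the choice $\gamma=\eta^2(1+\beta)^2k^2$; this is exactly what licenses $\exp(\eta\tilde g_j(t))\le 1+\eta\tilde g_j(t)+\eta^2\tilde g_j(t)^2$. Writing $w_j(t)/W_t=(p_j(t)-\gamma/k)/(1-\gamma)\le p_j(t)/(1-\gamma)$ and using the update $W_{t+1}=\sum_j w_j(t)\exp(\eta\tilde g_j(t))+e\alpha W_t$, one obtains
\[
\frac{W_{t+1}}{W_t}\le \frac{1}{1-\gamma}\Big(1+\eta\sum_j p_j(t)\tilde g_j(t)+\eta^2\sum_j p_j(t)\tilde g_j(t)^2\Big)+e\alpha,
\]
and summing logarithms (with $\log(1+x)\le x$) yields $\log\frac{W_{T+1}}{W_1}\le \frac{\eta}{1-\gamma}\sum_t\sum_j p_j(t)\tilde g_j(t)+\eta^2\sum_t\sum_j p_j(t)\tilde g_j(t)^2+(e\alpha+\gamma/(1-\gamma))T$, the last term being $\tilde{\Ocal}(1/\eta)$ or smaller once $\alpha=1/T$ and $\gamma T$ are substituted.

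For the \emph{lower bound} I would fix an arbitrary segmentation $1=T_1\le\cdots\le T_{S+1}=T$ and comparator actions $i^1,\dots,i^S$, and lower bound $W_{T+1}$ by following the single weight that tracks this sequence: inside an epoch the multiplicative factor $\exp(\eta\tilde g_{i^s}(t))$ applies, and at each of the $S-1$ epoch boundaries the additive mixing term $\frac{e\alpha}{k}W_t$ guarantees $w_{i^{s+1}}(t+1)\ge\frac{e\alpha}{k}W_t\ge\frac{e\alpha}{k}w_{i^{s}}(t)$, so a switch costs at most $\log(k/(e\alpha))=\tilde{\Ocal}(1)$ in the log-potential. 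Hence $\log\frac{W_{T+1}}{W_1}\ge\eta\sum_{s}\sum_{t=T_s+1}^{T_{s+1}}\tilde g_{i^s}(t)-\tilde{\Ocal}(S)$. This bookkeeping around the mixing term — and verifying that it does not interfere with the variance control inherited from \thmref{thm:mab} — is the one genuinely new ingredient here, and I expect it to be the main (though standard) obstacle.

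Combining the two bounds gives, simultaneously over all comparators,
\[
\eta\sum_s\sum_{t=T_s+1}^{T_{s+1}}\tilde g_{i^s}(t)-\frac{\eta}{1-\gamma}\sum_t\sum_j p_j(t)\tilde g_j(t)\le \eta^2\sum_t\sum_j p_j(t)\tilde g_j(t)^2+\tilde{\Ocal}(S).
\]
The rest is identical in spirit to \thmref{thm:mab}: (i) the ``geometric-mean'' query law makes $\sum_j p_j(t)\tilde g_j(t)^2$ concentrate around $(1+2\beta)\norm{\bp(t)}_{1/2}+\beta^2 k\norm{\bp(t)}_{1/2}$, since $q_j(t)$ as in \eqref{eq:qdist} is precisely the minimizer of $\sum_j p_j(t)/q_j(t)$, with value $\norm{\bp(t)}_{1/2}$; (ii) the additive bias $\beta$ in $\tilde g_j(t)$ drives a Freedman/Bernstein martingale argument so that, with probability at least $1-\delta$, the estimated sums are one-sided proxies for the true reward sums without overshooting; (iii) on the event $\{\tfrac1T\sum_t\norm{\bp(t)}_{1/2}\le v\}$, whose complement has probability $P(v,\delta,\mu)$ by definition, we replace $\sum_t\norm{\bp(t)}_{1/2}$ by $vT$. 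Substituting $\eta=\sqrt{S/\mu T}$, $\alpha=1/T$, $\beta=2\eta\sqrt{6\log(3k/\delta)}$ and $\gamma=\eta^2(1+\beta)^2k^2$ and collecting terms then turns the $\tilde{\Ocal}(S)/\eta$ switching cost into $\tilde{\Ocal}(\sqrt{S\mu T})$, the $\eta\,vT$ term into $\tilde{\Ocal}(\sqrt{Sv^2T/\mu})$, the martingale slack into $\tilde{\Ocal}(\sqrt{SvT})$, and the $\gamma$- and $\beta^2k^2$-driven leftovers into $\tilde{\Ocal}(k^2/\mu+k^2/T^{3/2})$, which is exactly the claimed bound.
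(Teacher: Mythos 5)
Your overall route is the paper's: the same potential $W_t=\sum_{j}w_j(t)$, an upper bound on $\log(W_{t+1}/W_t)$ via $\exp(x)\le 1+x+x^2$ (licensed by $\eta\tilde g_j(t)\le 1$, exactly as in \thmref{thm:mab}), a fixed-share-style lower bound that uses the additive term $\frac{e\alpha}{k}W_t$ to pay only $\log(k/\alpha)$ plus a constant per switch, and then verbatim reuse of the concentration analysis of \thmref{thm:mab} on the event $\frac{1}{T}\sum_t\norm{\bp(t)}_{1/2}\le v$, followed by plugging in $\eta=\sqrt{S/\mu T}$ and $\alpha=1/T$. (The paper organizes the lower bound per epoch, bounding $\log(W_{T_{s+1}+1}/W_{T_s+1})$ and summing over $s$, rather than tracking a single weight across the horizon; these are equivalent bookkeepings.)

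However, one step fails as written. In the potential upper bound you replace $w_j(t)/W_t=(p_j(t)-\gamma/k)/(1-\gamma)$ by $p_j(t)/(1-\gamma)$ in every term, which turns the constant term into $1/(1-\gamma)$ and leaves an extra additive $\gamma/(1-\gamma)$ per round, i.e.\ $\gamma T/(1-\gamma)$ in $\log(W_{T+1}/W_1)$. Your claim that this is ``$\tilde{\Ocal}(1/\eta)$ or smaller'' is false in general ($\gamma T\approx 2Sk^2/\mu$ versus $1/\eta=\sqrt{\mu T/S}$), and, more to the point, after dividing by $\eta$ it contributes roughly $\gamma T/\eta\approx 2(1+\beta)^2k^2\sqrt{ST/\mu}$ to the regret, which is not absorbed by any term of the claimed bound (the leading term is at most $k\sqrt{ST/\mu}$ since $v\le k$), so the theorem would not follow from your inequality; this would also break the downstream use in \thmref{thm:nonuniformupperboundswitch}. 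The fix is exactly what the paper does in the proof of \thmref{thm:mab}: keep the constant term exact, using $\sum_{j}(p_j(t)-\gamma/k)/(1-\gamma)=1$, so that
\[
\sum_{j=1}^{k}\frac{w_j(t)}{W_t}\,e^{\eta\tilde g_j(t)} ~\le~ 1+\frac{\eta}{1-\gamma}\sum_{j=1}^{k}p_j(t)\tilde g_j(t)+\frac{\eta^2}{1-\gamma}\sum_{j=1}^{k}p_j(t)\tilde g_j(t)^2 ,
\]
and no per-round $\gamma$ penalty enters the log-potential; the $\gamma$-dependence then appears only through the $\frac{1}{1-\gamma}$ factor multiplying reward sums bounded by $T$, contributing $\Ocal(\gamma T)=\Ocal(\eta^2k^2T)$, i.e.\ the $k^2/\mu$-type term of the statement. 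With that correction your argument coincides with the paper's proof.
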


In particular, we can also get a parallel version of \thmref{thm:nonuniformupperbound}, which shows that when there are only a small number of ``good'' actions (compared to $k$), the leading term has decaying dependence on $k$, unlike standard bandit algorithms where the dependence on $k$ is always $\sqrt{k}$.

\begin{theorem}\label{thm:nonuniformupperboundswitch}
Suppose that the reward of each action is chosen i.i.d. from a distribution supported on $[0,1]$. Furthermore, suppose that at each epoch $s$, there exists a subset $G^s\subset [k]$ of actions and a parameter $\Delta>0$ (where $|G^s|,\Delta$ are considered constants independent of $k,T$), such that  the expected reward of any action in $G^s$ is larger than the expected reward of any action in $[k]\setminus G^s$ by at least $\Delta$.
Then if we run \algref{alg:banditsswitch} with
\[
\mu = k^{\min\left\{1,~\max\left\{0,~\frac{4}{3}-\frac{1}{3}\log_k(T)
\right\}\right\}},
\]
it holds with probability at least $1-\delta$ that the regret of the algorithm is at most
\[
\tilde{\Ocal}\left(\sqrt{S k^{\max\{0,\frac{4}3-\frac{1}{3}\log_k(T)\}}T}
\right),
\]
where the $\tilde{\Ocal}$ notation hides numerical constants and factors logarithmic in $\delta$ and $k$.
\end{theorem}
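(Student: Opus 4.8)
The plan is to run the argument of \thmref{thm:nonuniformupperbound} inside each epoch, built on \thmref{thm:mabswitch} instead of \thmref{thm:mab}. Recall \thmref{thm:mabswitch} already supplies, for every $v\in[1,k]$, a regret bound $\tilde{\Ocal}(\sqrt{S(v^2/\mu+\mu+v)T}+k^2/\mu+k^2/T^{3/2})$ that fails with probability at most $\delta+P(v,\delta,\mu)$, and simultaneously over all $S$-segmentations; in particular it bounds the regret against the competitor that plays the best arm of $G^s$ throughout epoch $s$. So the only work left is to choose a small $v$ making $P(v,\delta,\mu)$ negligible under the piecewise-i.i.d.\ gap-$\Delta$ assumption, and then to optimize $\mu$.

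The key step is to control $\frac1T\sum_{t=1}^{T}\norm{\bp(t)}_{1/2}$ generated by \algref{alg:banditsswitch}. I would fix the high-probability event underlying \thmref{thm:mabswitch} (on which the estimates $\tilde{g}_j(t)$ are, for all $t,j$ simultaneously, unbiased up to the $\beta/q_j(t)$ bias and concentrated around their means), which fails with probability $\le\delta$, and on it re-use the single-epoch reasoning behind \thmref{thm:nonuniformupperbound}: within an epoch with good set $G^s$ and gap $\Delta$, the multiplicative-weights dynamics push the normalized weights onto $G^s$ within a burn-in of $\tau_s=\tilde{\Ocal}(1/\eta)$ rounds, after which $\norm{\bp(t)}_{1/2}\le\Ocal((\sqrt{|G^s|}+1)^2)+2\gamma k=\Ocal(1)+\Ocal(k^3/(\mu T))$, the $\gamma k$ term being the contribution of the $\gamma/k$ floor (split off via $\sqrt{a+b}\le\sqrt a+\sqrt b$, using $\gamma=\eta^2(1+\beta)^2k^2$). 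The one genuinely new point relative to \thmref{thm:nonuniformupperbound} is that there are now $S$ such burn-in windows: right after each distributional change the additive floor $\frac{e\alpha}{k}\sum_i w_i(t)$ of \algref{alg:banditsswitch} guarantees every arm, in particular the new good arms, carries normalized weight $\Omega(\alpha/k)=\Omega(1/(kT))$, so recovery onto $G^s$ still costs only $\tau_s=\tilde{\Ocal}(1/\eta)$ rounds. Bounding $\norm{\bp(t)}_{1/2}\le k$ trivially inside these windows, their total contribution to the time-average is $\frac kT\sum_{s=1}^{S}\tau_s=\tilde{\Ocal}(Sk/(\eta T))=\tilde{\Ocal}(k\sqrt{S\mu/T})$ (using $\eta=\sqrt{S/\mu T}$), which in the parameter range of interest is of the same order as $k^3/(\mu T)$ up to the constants $S,\Delta$ and logarithmic factors. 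Hence $\frac1T\sum_t\norm{\bp(t)}_{1/2}=\tilde{\Ocal}(1+k^3/(\mu T))$ on this event, i.e.\ $P(v,\delta,\mu)\le\delta$ for $v=\Theta(\max\{1,k^3/(\mu T)\})$, and \thmref{thm:mabswitch} then applies with $1-\delta$ replaced by $1-2\delta$ (rescale $\delta$).

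Plugging this $v$ into \thmref{thm:mabswitch}, the leading term becomes $\tilde{\Ocal}(\sqrt{S(\mu+\mu^{-1}+k^6/(\mu^3T^2)+k^3/(\mu^2T))T})$, and choosing $\mu=k^{\min\{1,\max\{0,\frac43-\frac13\log_k T\}\}}$ — literally the choice from \thmref{thm:nonuniformupperbound}, since the extra factor $S$ sits inside the square root and is treated as constant, hence does not move the optimizing $\mu$ — balances $\sqrt{S\mu T}$ against the $\sqrt{Sk^3T/\mu}$ and $k^2/\mu$ contributions and yields regret $\tilde{\Ocal}(\sqrt{S\,k^{\max\{0,\frac43-\frac13\log_k T\}}\,T})$, with the residual $k^2/\mu$ and $k^2/T^{3/2}$ terms dominated in the relevant range of $T$. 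This end-game is identical to that of \thmref{thm:nonuniformupperbound}.

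The main obstacle is the middle step: verifying that after each of the $S$ distributional changes the play distribution re-concentrates on the new good set within only $\tilde{\Ocal}(1/\eta)$ rounds, on a single high-probability event covering the biased, high-variance estimates $\tilde{g}_j(t)$ uniformly over all epochs. The bias $\beta/q_j(t)$ is harmless once the distribution is concentrated (there $q_j(t)=\Theta(\sqrt{p_j(t)})$, so a suppressed arm's bias is $\Ocal(\beta\sqrt{\mu T/k})=o(1)$), but during a burn-in window, when $\bp(t)$ is still close to uniform, it can be as large as $\tilde{\Ocal}(\beta k/\sqrt\gamma)$; the argument must therefore track the recovery carefully enough to conclude that the constant gap $\Delta$ still wins within $\tilde{\Ocal}(1/\eta)$ rounds (which holds once $T$ exceeds $k$ by a polylogarithmic factor, the only regime where the claimed bound beats $\sqrt{kT}$ anyway), leaning on the $\alpha$-floor of \algref{alg:banditsswitch}. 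This is precisely the bookkeeping that makes \algref{alg:banditsswitch}, rather than \algref{alg:bandits}, the right algorithm for the switching setting.
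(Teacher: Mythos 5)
There is a genuine gap, and you have named it yourself: the entire argument hinges on the claim that after each of the $S$ switches the play distribution re-concentrates on the new good set $G^s$ within a burn-in of $\tilde{\Ocal}(1/\eta)$ rounds, on a single high-probability event, after which $\norm{\bp(t)}_{1/2}=\Ocal(1)+\Ocal(k^3/(\mu T))$. This is a direct dynamical statement about the multiplicative-weights trajectory driven by the importance-weighted estimates $\tilde{g}_j(t)$, and it is not established in your proposal. It is also not a routine step: during a burn-in window the weights may still be concentrated on the \emph{old} good arms, so the new good arm is queried with probability as small as $q_j(t)=\Theta(\sqrt{\gamma}/k)=\Theta(\eta)$, making the per-round variance of its estimate of order $1/\eta$; over a window of length $\tau=\Theta(1/\eta)$ the noise in the cumulative estimate is then of the same order as the drift $\Delta\tau$, so the ``constant gap wins within $\tilde{\Ocal}(1/\eta)$ rounds'' claim is borderline and would at minimum require a careful self-referential concentration argument (the variance of the estimates depends on the random trajectory of $\bq(t)$ itself), uniformly over all $S$ windows. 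Claiming $\norm{\bp(t)}_{1/2}=\Ocal(1)$ after burn-in likewise requires the total mass on bad arms to be $\Ocal(1/k)$, which again needs proof, not just the $\alpha$-floor observation.

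The paper avoids this entirely with a bootstrap rather than a dynamical analysis: first apply \thmref{thm:mabswitch} with $v=k$ (so $P(v,\delta,\mu)=0$ unconditionally); the resulting bound controls $\sum_{s}\sum_{t=T_s+1}^{T_{s+1}}\sum_j p_j(t)\bigl(g_{i^s}(t)-g_j(t)\bigr)$, and since the rewards are i.i.d.\ and independent of $\bp(t)$, Azuma plus the per-epoch gap $\Delta$ converts this into a high-probability bound on $\sum_s\sum_t\sum_{j\in[k]\setminus G^s}p_j(t)$. The elementary inequality $\norm{\bp(t)}_{1/2}\le 2|G^s|+k\sum_{j\in[k]\setminus G^s}p_j(t)$ then yields a high-probability bound on $\frac1T\sum_t\norm{\bp(t)}_{1/2}$, which by the very definition of $P(\cdot,\delta,\mu)$ certifies a small admissible $v$ (with $P(v,\delta,\mu)\le 2\delta$); re-applying \thmref{thm:mabswitch} with this $v$ and optimizing $\mu$ exactly as in \thmref{thm:nonuniformupperbound} finishes the proof. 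Your outer shell (use \thmref{thm:mabswitch}, find a small feasible $v$, reuse the $\mu$-optimization with the extra $\sqrt{S}$) matches the paper, and your final calibration of $\mu$ is fine; what is missing is a proof of the middle step, and the indirect route above is how the paper supplies it without ever analyzing the post-switch recovery dynamics. If you want to keep your direct approach, you would need to actually prove the uniform burn-in lemma, which is substantially harder than the bootstrap.
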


Now, we are ready to present the main negative result of this section, which shows that in the setting of \thmref{thm:nonuniformupperbound}, \emph{any} standard bandit algorithm cannot have a regret better than $\Omega(\sqrt{kT})$, which is significantly worse. For simplicity, we will focus on the case where $S=2$: namely, that we measure regret with respect to a single action from round $1$ till some $t_0$, and then from $t_0+1$ till $T$. Moreover, we consider a simple case where $|G^1|=|G^2|=1$ and $\Delta=1/5$, so there is just a single action at a time which is significantly better than all the other actions in expectation.

\begin{theorem}\label{thm:lowboundswitch}
Suppose that $T\geq Ck$ for some sufficiently large universal constant $C$. Then in the setting of \thmref{thm:nonuniformupperbound}, there exists a randomized reward assignment (with $|G^1|=|G^2|=1$ and $\Delta=1/5$), such that for any standard bandit algorithm, its expected regret (over the rewards assignment and the algorithm's randomness) is at least $0.007 \sqrt{(k-1)T}$.
\end{theorem}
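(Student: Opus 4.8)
The plan is to build a two-epoch instance with one ``anchor'' arm that is permanently a near-best arm — so that watching it reveals nothing — together with a randomly relocated and randomly timed good arm that the learner cannot track without continuously sampling every candidate. Concretely, set $\Delta=1/5$ and let arm $1$ have reward $g_1(t)\sim\mathrm{Ber}(\tfrac12+\Delta)$ i.i.d.\ in every round. Draw $a_2$ uniformly from $\{2,\dots,k\}$ and $t_0$ uniformly from $\{1,\dots,\lceil T/2\rceil\}$; let every $j\in\{2,\dots,k\}\setminus\{a_2\}$ have reward $\mathrm{Ber}(\tfrac12)$ i.i.d.\ always, and let $a_2$ have reward $\mathrm{Ber}(\tfrac12)$ on rounds $t\le t_0$ and $\mathrm{Ber}(\tfrac12+2\Delta)=\mathrm{Ber}(9/10)$ on rounds $t>t_0$. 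Then $[1,t_0]$ has unique best arm $1$ and $(t_0,T]$ has unique best arm $a_2$, the runner-up trailing by exactly $\Delta$ in both cases; so the instance lies in the family of \thmref{thm:nonuniformupperbound} with $S=2$, $|G^1|=|G^2|=1$, separation $1/5$, and its best $2$-action comparator plays arm $1$ then $a_2$ (this is also why $\Delta=1/5$: one needs $\tfrac12+2\Delta\le1$).

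\emph{Reduction and change of measure.} A short computation of per-round expected-reward gaps shows $\E[\mathrm{Reg}]\ge\Delta\,\E[N_1+N_2]$, where $N_1=|\{t\le t_0:i_t\ne1\}|$ and $N_2=|\{t>t_0:i_t\ne a_2\}|$ (the epoch-$2$ useless-arm pulls actually cost $2\Delta$, which I discard). Fix a standard bandit algorithm; conditioning on $(a_2,t_0)=(j,s)$ let $\mathbb{P}_{j,s}$ be the law of its trajectory, and let $\mathbb{P}_*$ be the law under the ``no-jump'' assignment (arm $1\sim\mathrm{Ber}(\tfrac12+\Delta)$, all others $\sim\mathrm{Ber}(\tfrac12)$, forever). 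Since $\mathbb{P}_{j,s}$ and $\mathbb{P}_*$ differ only in arm $j$'s law on rounds $>s$, the relative-entropy chain rule and Pinsker's inequality give, for every $t\ge s$ and trajectory event $A$, $|\mathbb{P}_{j,s}(A)-\mathbb{P}_*(A)|\le\sqrt{\tfrac12\kappa\,\E_*[n_j(t)-n_j(s)]}$, with $\kappa=\mathrm{KL}(\mathrm{Ber}(\tfrac12)\|\mathrm{Ber}(\tfrac12+2\Delta))$ a universal constant and $n_j(t)$ the number of pulls of $j$ through round $t$; moreover the two laws agree on all rounds $\le s$. Using $N_2=(T-s)-(n_{a_2}(T)-n_{a_2}(s))$, summing this bound over $t=s{+}1,\dots,T$, averaging over $j$ (Jensen for $\sqrt{\cdot}$, exchangeability of arms $2,\dots,k$ under $\mathbb{P}_*$), and the coincidence of the laws for $N_1$, yields
\[
\frac{1}{k-1}\sum_{j}\E_{j,s}[N_1+N_2]\ \ge\ (k-1)\bar z_s+(T-s)-\bar y_s-(T-s)\sqrt{\tfrac\kappa2\,\bar y_s},
\]
where $\bar z_s=\tfrac1{k-1}\E_*|\{t\le s:i_t\ne1\}|$ and $\bar y_s=\tfrac1{k-1}\E_*|\{s<t\le T:i_t\ne1\}|$ are the learner's per-arm exploration before and after round $s$ under $\mathbb{P}_*$, and $\bar z_s+\bar y_s=\bar\nu$ is independent of $s$.

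\emph{The trade-off.} It remains to optimize over the exploration budget $\bar\nu$. If $\bar\nu\le c_0/\kappa$ for a small constant, then $\sqrt{\tfrac\kappa2\bar\nu}\le\tfrac12$, the right side above is $\ge\tfrac12(T-s)-O(1)$, and averaging over $s\le T/2$ with $T\ge Ck$ already gives $\E[\mathrm{Reg}]=\Omega(\Delta T)=\Omega(\sqrt{(k-1)T})$. If $\bar\nu$ is larger, I split once more: either a constant fraction of the exploration lands on rounds $\le T/2$, in which case $\E_s[(k-1)\bar z_s]=\Omega((k-1)\bar\nu)$ and $N_1$ alone contributes $\Omega(\sqrt{(k-1)T})$; or it concentrates on rounds $>T/2$, in which case for most draws $t_0=s\le T/2$ the learner keeps playing the anchor arm throughout a $\Theta(T)$-long initial stretch of epoch $2$ (it has not noticed the jump), and a second application of the change of measure — now lower-bounding $|\{s<t\le T/2:i_t=1\}|$ against $\mathbb{P}_*$ — turns this into $N_2=\Omega(T)$. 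Balancing the two sides of the trade-off, the minimizing exploration rate is $\asymp1/\sqrt{(k-1)T}$, and substituting it back gives $\E[\mathrm{Reg}]\ge0.007\sqrt{(k-1)T}$ once $T\ge Ck$ with $C$ a large enough absolute constant.

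\emph{Main obstacle.} The crux is precisely this trade-off combined with the unknown $t_0$: to react to the relocation of the good arm the learner must sample all $k-1$ candidates at a near-uniform rate across a window of length $\Theta(T)$, and any such policy either forfeits $\Omega(\sqrt{(k-1)T})$ reward during epoch $1$ or else is caught unprepared and plays the wrong arm for $\Omega(T)$ rounds of epoch $2$. Making this rigorous requires choosing — adaptively to where the learner's realized exploration actually sits — which sub-window of epoch $2$ feeds the change-of-measure estimate of $N_2$, and controlling the correlation between $t_0$ and that exploration profile; the remaining steps are routine concentration and accounting (in particular keeping the $2\Delta$-cost useless-arm pulls on the books so the final constant does not drop below $0.007$).
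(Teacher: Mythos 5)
Your overall skeleton—random identity of the improved arm, random change point, a change-of-measure/Pinsker comparison against a no-jump law, and a trade-off between exploration cost and detection—is the same as the paper's, and your construction (an always-good anchor arm, a change that always occurs at a uniform $t_0\le T/2$, with epoch-one exploration cost playing the role that the paper assigns to a probability-$1/2$ ``no change'' event $t_0=T$) is a legitimate variant. The genuine gap is in how you close the trade-off. Your change of measure is applied globally, over the entire post-change horizon (or the fixed interval $(s,T/2]$): the Pinsker term $(T-s)\sqrt{\kappa\bar y_s/2}$ becomes vacuous as soon as the per-arm exploration after $s$ exceeds a constant of order $1/\kappa$, i.e.\ as soon as total exploration exceeds order $k$. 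Your two-branch repair does not cover the intermediate regime where total exploration lies between order $k$ and order $\sqrt{(k-1)T}$. Concretely, take an algorithm whose total exploration is $\sqrt{T}$ (with $T\gg k^2$), spread uniformly over arms and rounds: it falls under your first branch, yet that branch only yields regret of order $\Delta\sqrt{T}\ll\sqrt{(k-1)T}$, while the second branch's change of measure is vacuous because the per-arm exploration in $(s,T/2]$ is of order $\sqrt{T}/(k-1)\gg 1/\kappa$. Such an algorithm does suffer regret $\Omega(\sqrt{(k-1)T})$—through a detection delay of order $(k-1)\sqrt{T}$—but your argument has no mechanism to charge that delay. (Branch one also has a secondary flaw: with $t_0$ uniform on $[1,T/2]$, exploration placed just before $T/2$ contributes almost nothing to $\E_s[\bar z_s]$, so ``a constant fraction of exploration before $T/2$'' does not give $\E_s[(k-1)\bar z_s]=\Omega((k-1)\bar\nu)$; and even granting that, concluding $\Omega(\sqrt{(k-1)T})$ additionally requires $(k-1)\bar\nu=\Omega(\sqrt{(k-1)T})$, which is not implied by the case split.)

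The missing idea—which you gesture at in your ``main obstacle'' paragraph but do not supply—is the paper's fixed-length windowing. There, the change of measure is applied only to the window of length $d\lceil\sqrt{T}\rceil$ immediately following $t_0$, with $d=\sqrt{k-1}/10$ chosen in advance, independently of the algorithm. The KL divergence is then controlled by the expected number of pulls of the changed arm inside that short window under the null law, and summing over all pairs $(a,t_0)$ this total is at most $d\lceil\sqrt{T}\rceil N^{>1}$, so the average per-window pull count is of order $N^{>1}/\sqrt{(k-1)T}$; meanwhile the reward forfeited in a missed window is $\Delta\, d\lceil\sqrt{T}\rceil=\Theta(\sqrt{(k-1)T})$. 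This makes both sides of the dichotomy—$N^{>1}$ large (exploration cost) versus $N^{>1}$ small (undetected change within the window)—produce $\Omega(\sqrt{(k-1)T})$ with no intermediate regime left over, and it is also what allows the explicit constant $0.007$ to be computed; your sketch asserts that constant, but the optimization supporting it only exists once the windowed argument is in place.
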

The constant $0.007$ is rather arbitrary and is not the tightest possible.

We note that a related $\Omega(\sqrt{T})$ lower bound has been obtained in \cite{GariMou11}. However, their result does not apply to the case $S=2$ and more importantly, does not quantify a dependence on $k$. It is interesting to note that unlike the standard lower bound proof for standard bandits \cite{AuerCesFrSc02}, we obtain here an $\Omega(\sqrt{kT})$ regret even when $\Delta>0$ is fixed and doesn't decay with $T$.

\section{The Necessity of a Non-Uniform Querying Distribution}

The theoretical results above demonstrated the efficacy of our approach, compared to standard bandit algorithms. However, the exact form of our querying distribution (querying action $i$ with probability proportional to $\sqrt{p_j(t)}$) might still seem a bit mysterious. For example, maybe one can obtain similar results just by querying actions uniformly at random? Indeed, this is what has been done in some other online learning scenarios where queries were allowed (e.g., \cite{YuMa09,AgarwalDekelXiao10}). However, we show below that in the adversarial setting, an adaptive and non-uniform querying distribution is indeed necessary to obtain regret bounds better than $\sqrt{kT}$. For simplicity, we return to our basic setting, where our goal is to compete with just the best single fixed action in hindsight.

\begin{theorem}\label{thm:fixedexplorelowerbound}
Consider any online algorithm over $k>2$ actions and horizon $T$, which queries the actions based on a fixed distribution. Then there exists a strategy for the adversary conforming to the setting described in \thmref{thm:nonuniformupperbound}, for which the algorithm's regret is at least $c\sqrt{kT}$ for some universal constant $c$.
\end{theorem}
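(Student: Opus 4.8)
The plan is to exploit the one structural weakness of a fixed querying distribution: it allocates its ``information budget'' obliviously, so the adversary can hide the good action wherever queries are scarce. Write $\bq=(q_1,\dots,q_k)$ for the fixed query distribution, and let $L$ be the set of the $\lfloor k/2\rfloor$ indices with smallest $q_j$; then $q_j\le 2/k$ for each $j\in L$, and $\sum_{j\in L}q_j\le\tfrac12$. The adversary draws the unique good action $i^\star$ uniformly from $L$ and sets $g_{i^\star}(t)$ i.i.d.\ $\mathrm{Bernoulli}(\tfrac12+\Delta)$, $g_j(t)$ i.i.d.\ $\mathrm{Bernoulli}(\tfrac12)$ for $j\neq i^\star$, with $\Delta$ chosen of order $\sqrt{k/T}$ (a constant in the regime $T=\Theta(k)$ where the improvement of \thmref{thm:nonuniformupperbound} is relevant); this is an i.i.d.\ instance with $|G|=1$, as required. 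The reason ``fixed'' is essential is that the transcript available to the learner --- the queried actions $j_1,\dots,j_T$ and the observed rewards $g_{j_1}(1),\dots,g_{j_T}(T)$ --- has a law that does \emph{not} depend on the learner's played actions at all: each $j_t$ is drawn from $\bq$ regardless of history, and the rewards are fixed in advance. Consequently, writing $\mathbb{P}_a$ for this transcript law when $i^\star=a$ and $\mathbb{P}_0$ for the null law in which every action is $\mathrm{Bernoulli}(\tfrac12)$, a one-line chain-rule computation gives $\mathrm{KL}(\mathbb{P}_0\Vert\mathbb{P}_a)=T\,q_a\cdot\mathrm{kl}(\tfrac12,\tfrac12+\Delta)=O(T\,q_a\,\Delta^2)$, since only the rounds in which $a$ is actually queried contribute any divergence; for $a\in L$ this is $O(T\Delta^2/k)$.

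From here I use the standard change-of-measure (Le~Cam / Pinsker) argument. With $N_a=\sum_{t=1}^T\mathbf{1}_{i_t=a}$, the quantity $N_a/T\in[0,1]$ is a function of the transcript and of independent internal randomness, so $\E_a[N_a]\le\E_0[N_a]+T\sqrt{\tfrac12\,\mathrm{KL}(\mathbb{P}_0\Vert\mathbb{P}_a)}$. Summing over $a\in L$, using $\sum_{a\in L}\E_0[N_a]\le T$ and Cauchy--Schwarz in the form $\sum_{a\in L}\sqrt{q_a}\le\sqrt{|L|\sum_{a\in L}q_a}\le\sqrt{k}/2$, gives $\sum_{a\in L}\E_a[N_a]\le T+O(1)\,\Delta\,T^{3/2}\sqrt{k}$. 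Since the regret in instance $a$ is at least $\Delta\,(T-\E_a[N_a])$, averaging over the uniform choice of $i^\star\in L$ and using $|L|\ge k/2$ yields a lower bound of the form $\Delta T\bigl(1-O(1/k)\bigr)-O(1)\,\Delta^2 T^{3/2}/\sqrt{k}$. Taking $\Delta$ a small enough constant multiple of $\sqrt{k/T}$ (and at most $1/4$, which is where $\mathrm{kl}(\tfrac12,\tfrac12+\Delta)=O(\Delta^2)$ is valid) makes the subtracted term at most half the first, leaving $\Omega(\Delta T)=\Omega(\sqrt{kT})$, which is the claim.

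The main obstacle is to make the information-theoretic step airtight in the decoupled model rather than merely asserting it. One must verify that the observation channel really is oblivious --- so that each $j_t$ carries the \emph{same} law $\bq$ under every world and contributes nothing to the KL divergence; this is precisely the property that fails for an adaptive querying rule, which can pour its queries into the suspected good arm and thereby learn $i^\star$ with only $O(1)$ samples. One must also check that $N_a/T$ is measurable with respect to the transcript together with independent internal randomness (so the transfer inequality applies), and that the averaging over the random placement of $i^\star$ combines correctly with the per-action KL bounds --- which is where choosing $L$ as the low-query half, giving simultaneously $q_a\le 2/k$ and $\sum_{a\in L}q_a\le\tfrac12$, is used. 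The degenerate small-$T$ regime, in which $\Delta$ must be capped at a constant, is handled separately by the cruder observation that $T$ rounds of querying touch at most $T$ actions, so an $i^\star$ planted among the $\Omega(k)$ low-query actions is, with constant probability, never queried at all and hence indistinguishable from its unqueried neighbours. Everything else is bookkeeping of numerical constants.
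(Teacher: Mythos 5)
Your information-theoretic core is fine (with a fixed querying distribution the transcript law is oblivious to the played actions, the KL against the null accumulates at rate $q_a\,\mathrm{kl}(\tfrac12,\tfrac12+\Delta)$ per round, Pinsker plus averaging over the planted arm gives $\Omega(\Delta T)$ once $\Delta\asymp\sqrt{k/T}$), but the construction does not prove the theorem as stated, because it does not conform to the setting of \thmref{thm:nonuniformupperbound}. That setting requires the gap $\Delta$ between $G$ and $[k]\setminus G$ to be a \emph{constant independent of $k$ and $T$}, whereas your single planted arm has $\Delta=\Theta(\sqrt{k/T})\to 0$ as soon as $T\gg k$. Your escape hatch --- restricting to $T=\Theta(k)$ --- is both unauthorized by the statement and aimed at the wrong regime: the improvement of \thmref{thm:nonuniformupperbound} over $\sqrt{kT}$ only materializes when $T$ is large relative to $k$ (at $T=\Theta(k)$ the exponent $\max\{0,\tfrac43-\tfrac13\log_k T\}$ is $1$ and nothing is gained), so the separation the theorem is meant to certify is exactly in the regime your instance abandons. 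Note also that the $\tilde\Ocal$ in the proof of \thmref{thm:nonuniformupperbound} hides an inverse dependence on $\Delta$, so with a vanishing gap the upper bound for the adaptive-query algorithm no longer beats $\sqrt{kT}$ and the lower bound would not separate anything.

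The paper's proof avoids this tension with a different instance: it puts \emph{two} near-optimal arms inside $G$, with means $\tfrac12$ and $\tfrac12+\epsilon$ where $\epsilon=\Theta(\sqrt{k/T})$, and gives all remaining $k-2$ arms deterministic reward $0$. Then $|G|=2$ and $\Delta=\tfrac12$ are genuine constants (the setting constrains only the gap between $G$ and its complement, not the gaps within $G$), while the statistically hard question --- which of the two arms in $G$ is better --- carries the small $\epsilon$ gap. By pigeonhole, any fixed querying distribution assigns probability at most $1/(k-1)$ to at least two arms; planting the pair there means they are each queried only $\Ocal(T/k)=\Ocal(1/\epsilon^2)$ times, which is information-theoretically insufficient to tell them apart, forcing $\Omega(T)$ pulls of the worse one and regret $\Omega(\epsilon T)=\Omega(\sqrt{kT})$. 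Your change-of-measure machinery (oblivious transcript, per-arm KL of order $Tq_a\epsilon^2$, averaging over which of the two low-query arms is the better one) can be reused essentially verbatim to make this rigorous; what is missing in your proposal is the idea of hiding the $\sqrt{k/T}$-sized gap \emph{inside} $G$ so that the constant-$\Delta$ requirement of \thmref{thm:nonuniformupperbound} is met for all $k,T$, not just $T=\Theta(k)$.
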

A proof sketch is presented in the appendix of the full version. The intuition of the proof is that if the querying distribution is fixed, and there are only a small number of ``good'' actions, then we spend too much time querying irrelevant actions, and this hurts our regret performance.

\section{Experiments}

We compare the decoupled approach with common multi-armed bandit algorithms in a simulated adversarial setting.
Our user chooses between $k$ communication channels, where sensing and transmission can be decoupled. In other words, she may choose a certain channel for transmission while sensing (i.e., querying) a different, seemingly less attractive, channel.

We simulate a heavily loaded UWB environment with a single, alternating, channel which is fit for transmission.
The rewards of $k-1$ channels are drawn from alternating uniform and truncated Gaussian distributions with random parameters, yielding adversarial rewards in the range $\left[0,6\right]$. The remaining channel yields stochastic rewards drawn from a truncated Gaussian distribution bounded in the same range but with a mean drawn from $\left[3,6\right]$. The identity of the better channel and its distribution parameters are re-drawn at exponentially distributed switching times.
\begin{figure}[ht]
\vskip 0in
\begin{center}
\centerline{\includegraphics[width=\columnwidth]{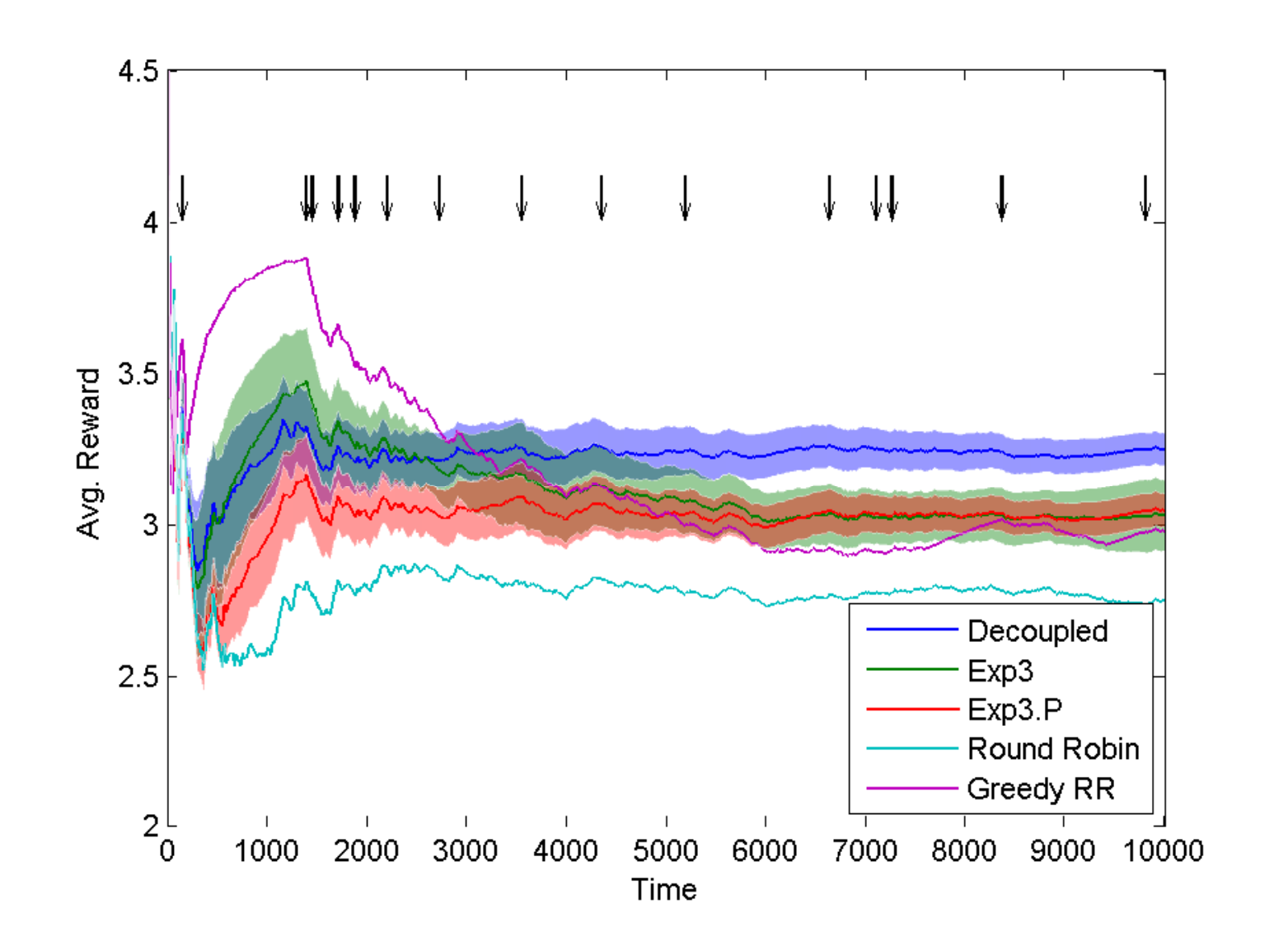}}
\vskip -0.1in
\caption{Average reward for different algorithms over time. Shaded areas around plots represent the standard deviation over repetitions.}
\label{fig:avgReward}
\end{center}
\vskip -0.3in
\end{figure}

Figure \ref{fig:avgReward} displays the results of a scenario with $k = 10$ channels, comparing the average reward acquired by the different algorithms over $T=10,000$ rounds. We implemented Algorithm \ref{alg:bandits}, Exp3 \cite{AuerCesFrSc02}, Exp3.P \cite{AuerCesFrSc02}, a simple round robin policy (which just cycles through the arms in a fixed order) and a ``greedy'' decoupled form of round robin, which performs uniform queries and picks actions greedily based on the highest empirical average reward. The black arrows indicate rounds in which the identity of the stochastic arm and its distribution parameters were re-drawn.
The results are averaged over $50$ repetitions of a specific realization of rewards. Although we have tested our algorithm's performance on several realizations of switching times and rewards with very good results, we display a single realization of these for the sake of clarity.
\begin{figure}[ht]
\vskip 0in
\begin{center}
\centerline{\includegraphics[width=0.8\columnwidth]{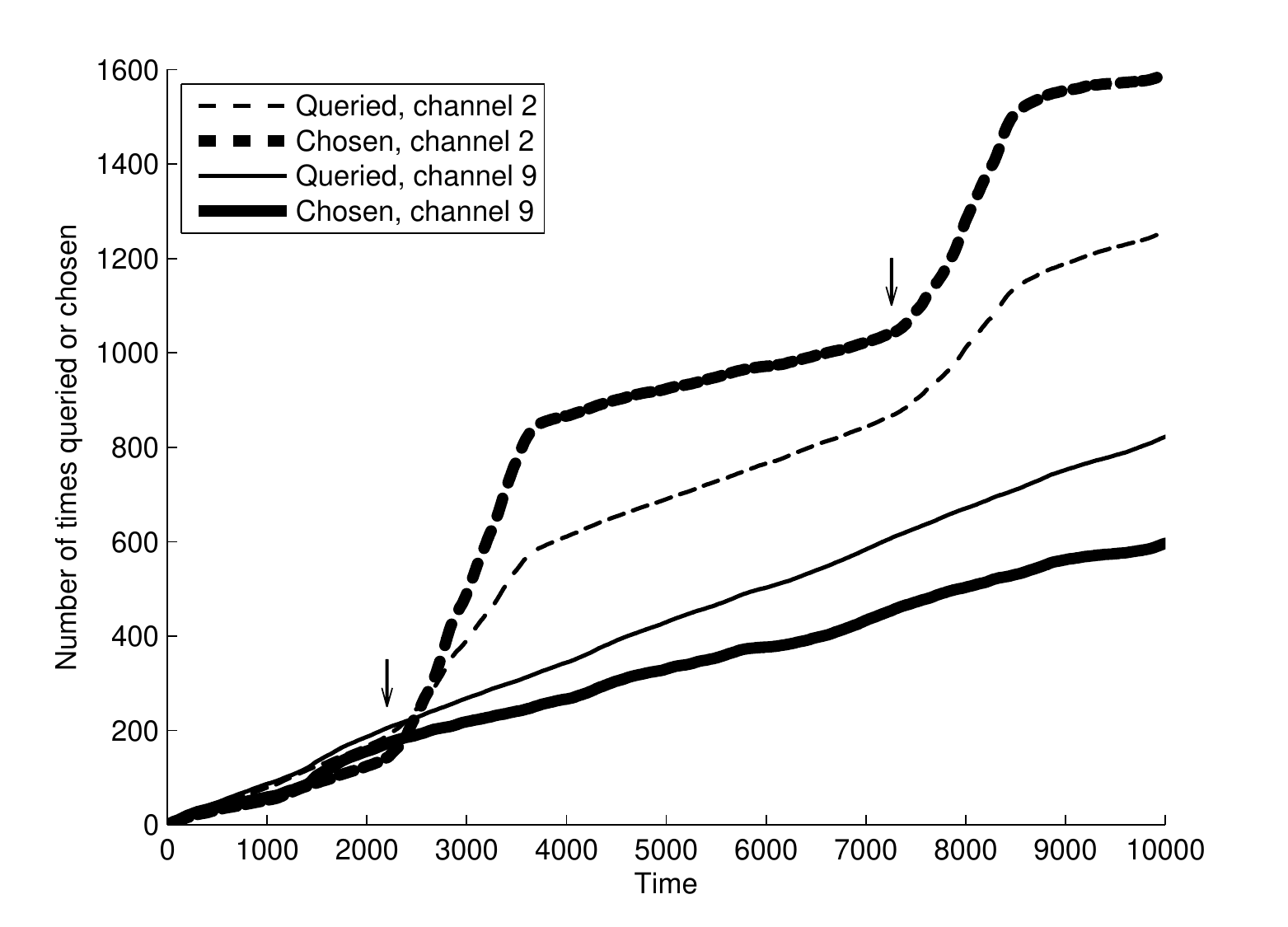}}
\vskip -0.1in
\caption{Number of times channels were chosen and queried over time, for two of $k=10$ arms. Arrows mark times in which channel 2 was drawn as the better channel.}
\label{fig:sampleQuery}
\end{center}
\vskip -0.3in
\end{figure}

Figure \ref{fig:sampleQuery} displays the dynamics of channel selection for two of the $k=10$ channels. The thick plots represent the number of times a channel was chosen over time, and the thin plots represent the number of times it was queried. The dashed plots represent a channel which was drawn as the better channel during some periods, resulting in a relatively high average reward, while the solid plots represent a channel with a low average reward. The increased flexibility of the decoupled approach is evident from the graph, as well as the adaptive, nonlinear sampling policy.

Comments: We implement Algorithm \ref{alg:bandits} and not Algorithm \ref{alg:banditsswitch} since the number of switches is unknown a-priori. Also, the rewards are in the range $\left[0,6\right]$ in order to keep all implemented algorithms on a similar scale, without violating the boundedness assumption.

\section{Discussion}\label{sec:discussion}

In this paper, we analyzed if and how one can benefit in settings where exploration and exploitation can be ``decoupled:'' namely, that one can query for rewards independently of the action actually picked. We developed some algorithms for this setting, and showed that these can indeed lead to improved results, compared to the standard bandit setting, under certain conditions. We also performed some experiments that corroborate our theoretical findings.

For simplicity, we focused on the case where only a single reward may be queried. If $c>1$ queries are allowed, it is not hard to show parallel guarantees to those in this paper, where the dependence on $k$ is replaced by dependence on $k/c$. Algorithmically, one simply needs to repeatedly sample from the query distribution $c$ times, instead of a single time. We conjecture that similar lower bounds can be obtained as well. Interestingly, it seems that being allowed to see the reward of the action actually picked, on top of the queried reward, does not result in significantly improved regret guarantees (other than better constants).

Several open questions remain. First, our results do not apply when the rewards are chosen by an adaptive adversary (namely, that the rewards are not fixed in advance but may be chosen individually at each round, based on the algorithm's behavior in previous rounds). This is not just for technical reasons, but also because data and algorithm dependent quantities like $P(v,\delta,\mu)$ do not make much sense if the rewards are not considered as fixed quantities.

A second open question concerns the possible correlation between sensing and exploration. In some applications it is plausible that the choice of which arm to exploit affects the quality of the sample of the arm that is explored. For instance, in the UWB sensing example discussed in the introduction transmitting and receiving in the same channel is much less preferred than sensing in another channel because of interference in the same frequency band. It would be interesting to model such dependence and take it into account in the learning process.

Finally, it remains to extend other bandit-related algorithms, such as EXP4 \cite{AuerCesFrSc02}, to our setting, and study the advantage of decoupling in other adversarial online learning problems.

\subsection*{Acknowledgements.}
This research was partially supported by the CORNET consortium (\url{http://www.cornet.org.il/}).

\bibliographystyle{icml2012}
\bibliography{mybib}

\appendix

\onecolumn

\section{Appendix}

\subsection{Proof of \thmref{thm:mab}}

We begin by noticing that for any possible distribution $p_1(t),\ldots,p_k(t)$, it must hold that $\norm{\bp(t)}_{1/2}\in [1,k]$. We will use this observation implicitly throughout the proof.

For notational simplicity, we will write $P(v)$ instead of $P(v,\delta,\mu)$, since we will mainly consider things as a function of $v$ where $\delta,\mu$ are fixed.

We will need the following two lemmas.

\begin{lemma}\label{lem:g}
Suppose that $\beta\leq 1$. Then it holds with probability at least $1-\delta$ that for any $i=1,\ldots,k$,
\[
\sum_{t=1}^{T}\tilde{g}_i(t)\geq \sum_{t=1}^{T}g_i(t)-\frac{\log(k/\delta)}{\beta}
\]
\end{lemma}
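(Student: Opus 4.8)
The plan is to use the standard exponential-moment (supermartingale) argument underlying the analysis of EXP3.P. Fix an action $i$, and let $\mathcal{F}_{t-1}$ be the $\sigma$-algebra generated by the algorithm's random choices through round $t-1$, so that $\bp(t)$, and hence $\bq(t)$, are $\mathcal{F}_{t-1}$-measurable (the rewards $g_j(t)$ being fixed in advance). Since $p_i(t)\ge\gamma/k>0$ we have $q_i(t)>0$ for all $i,t$, so the estimators are well-defined. Write $\hat g_i(t)=g_i(t)\mathbf{1}_{j_t=i}/q_i(t)$ for the ``plain'' importance-weighted estimator, so $\tilde g_i(t)=\hat g_i(t)+\beta/q_i(t)$ and $\E[\hat g_i(t)\mid\mathcal{F}_{t-1}]=g_i(t)$. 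After rearranging, the claim is equivalent to: with probability at least $1-\delta$, simultaneously for all $i$,
\[
\sum_{t=1}^{T}\bigl(g_i(t)-\hat g_i(t)\bigr)\;\le\;\sum_{t=1}^{T}\frac{\beta}{q_i(t)}+\frac{\log(k/\delta)}{\beta}.
\]

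The key step is a pointwise conditional exponential-moment bound. Because $\hat g_i(t)\ge 0$ and $g_i(t)\in[0,1]$, we have $\beta\bigl(g_i(t)-\hat g_i(t)\bigr)\le\beta g_i(t)\le\beta\le 1$, which is exactly where the hypothesis $\beta\le 1$ is used, and it licenses the elementary inequality $e^x\le 1+x+x^2$ (valid for all $x\le 1$). Taking conditional expectations, using $\E[\hat g_i(t)\mid\mathcal{F}_{t-1}]=g_i(t)$ and $\E[\hat g_i(t)^2\mid\mathcal{F}_{t-1}]=g_i(t)^2/q_i(t)\le 1/q_i(t)$, gives
\[
\E\!\left[e^{\beta(g_i(t)-\hat g_i(t))}\mid\mathcal{F}_{t-1}\right]\;\le\;1+\beta^2\,\var\!\bigl(\hat g_i(t)\mid\mathcal{F}_{t-1}\bigr)\;\le\;1+\frac{\beta^2}{q_i(t)}\;\le\;\exp\!\left(\frac{\beta^2}{q_i(t)}\right).
\]
Hence the process $M_t=\exp\!\Bigl(\beta\sum_{\tau\le t}(g_i(\tau)-\hat g_i(\tau))-\beta\sum_{\tau\le t}\beta/q_i(\tau)\Bigr)$ satisfies $\E[M_t\mid\mathcal{F}_{t-1}]\le M_{t-1}$, i.e. it is a supermartingale with $M_0=1$, so $\E[M_T]\le 1$.

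Finally, Markov's inequality on the nonnegative random variable $M_T$ yields $\Pr\bigl(\beta\sum_t(g_i(t)-\hat g_i(t))-\beta\sum_t\beta/q_i(t)>\log(k/\delta)\bigr)\le\delta/k$, which is precisely the ``bad event'' for action $i$; a union bound over the $k$ actions completes the proof. I do not expect a genuine obstacle here — it is a routine concentration argument — the only points needing care are checking that $\beta(g_i(t)-\hat g_i(t))\le 1$ so $e^x\le1+x+x^2$ applies, and verifying that the correction term $\beta^2/q_i(t)$ in $M_t$ is exactly $\beta$ times the $\beta/q_i(t)$ term that appears in the statement, so the bookkeeping lines up.
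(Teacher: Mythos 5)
Your proof is correct and follows essentially the same route as the paper's: the same conditional exponential-moment bound via $e^x\le 1+x+x^2$ (with $\beta\le 1$ used exactly where you use it), the same cancellation of the $\beta^2/q_i(t)$ correction against the bias term in $\tilde g_i(t)$, and the same Markov/Chernoff plus union bound over the $k$ actions; phrasing the accumulation as an explicit supermartingale $M_t$ rather than the paper's per-round $(1+x)e^{-x}\le 1$ bound is only a cosmetic difference.
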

\begin{proof}
Let $\E_t$ denote expectation with respect to the algorithm's randomness at round $t$, conditioned on the previous rounds.
Since $\exp(x)\leq 1+x+x^2$ for $x\leq 1$, we have by definition of $\tilde{g}_i(t)$ that
\begin{align*}
&\E_t\left[\exp\left(\beta(g_{i}(t)-\tilde{g}_i(t))\right)\right]\\
&= \E_t\left[\exp\left(\beta\left( g_{i}(t)-\frac{g_i(t)\mathbf{1}_{j_t=i}}{q_i(t)}\right)-\frac{\beta^2}{q_i(t)}\right)\right]\\
&\leq
\left(1+\E_t\left[\beta\left( g_{i}(t)-\frac{g_i(t)\mathbf{1}_{j_t=i}}{q_i(t)}\right)\right]+\E_t\left[\left(\beta\left( g_{i}(t)-\frac{g_i(t)\mathbf{1}_{j_t=i}}{q_i(t)}\right)\right)^2\right]\right)
\exp\left(-\frac{\beta^2}{q_i(t)}\right)\\
&\leq
\left(1+0+\beta^2\E_t\left[\left(\frac{g_i(t)\mathbf{1}_{j_t=i}}
{q_i(t)}\right)^2\right]\right)
\exp\left(-\frac{\beta^2}{q_i(t)}\right)\\
&\leq
\left(1+\frac{\beta^2}{q_i(t)}\right)
\exp\left(-\frac{\beta^2}{q_i(t)}\right).
\end{align*}
Using the fact that $(1+x)\exp(-x)\leq 1$, we get that this expression is at most $1$. As a result, we have
\[
\E\left[\exp\left(\beta\sum_{t=1}^{T}\left(g_i(t)-\tilde{g}_i(t)\right)\right)\right]
\leq 1.
\]
Now, by a standard Chernoff technique, we know that
\[
\Pr\left(\sum_{t=1}^{T}\left(g_i(t)-\tilde{g}_i(t)\right)>\epsilon\right)
~\leq~ \exp(-\beta \epsilon)\E\left[\exp\left(\beta\sum_{t=1}^{T}\left(g_i(t)-\tilde{g}_i(t)\right)\right)\right]
~\leq~ \exp(-\beta \epsilon).
\]
Substituting $\delta=\exp(-\beta \epsilon)$, solving for $\epsilon$, and using a union bound to make the result hold simultaneously for all $i$, the result follows.
\end{proof}

We will also need the following straightforward corollary of Freedman's inequality \cite{Freedman75} (see also Lemma A.8 in \cite{CesaBianchiLu06})
\begin{lemma}\label{lem:freedman}
Let $X_1,\ldots,X_T$ be a martingale difference sequence with respect to the filtration $\{\mathcal{F}_t\}_{t=1,\ldots,T}$, and with $|X_i|\leq B$ almost surely for all $i$. Also, suppose that for some fixed $v>0$ and confidence parameter $P(v)\in(0,1)$, it holds that
$\Pr(\sum_{t=1}^{T}\E[X_t^2|\mathcal{F}_{t-1}]> vT)\leq P(v)$.
Then for any $\delta\in (0,1)$, it holds with probability at least $1-\delta-P(v)$ that
\[
\sum_{t=1}^{T}X_t \leq \sqrt{2\log\left(\frac{1}{\delta}\right)vT}+\frac{B}{2}\log\left(\frac{1}{\delta}\right).
\]
\end{lemma}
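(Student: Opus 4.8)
The statement is a direct consequence of Freedman's concentration inequality for martingales \cite{Freedman75}, so the only work is in choosing the right deviation threshold and then doing a union bound. Recall that Freedman's inequality says that for a martingale difference sequence $X_1,\dots,X_T$ with $|X_t|\le B$ almost surely, writing $V_T=\sum_{t=1}^T\E[X_t^2\mid\mathcal{F}_{t-1}]$ for the sum of conditional variances, one has for every $a>0$ and every $\sigma^2>0$
\[
\Pr\left(\sum_{t=1}^T X_t\ge a \ \text{ and }\ V_T\le \sigma^2\right)\ \le\ \exp\left(-\frac{a^2}{2\sigma^2+\tfrac{2}{3}Ba}\right).
\]
The plan is to instantiate this with $\sigma^2=vT$, convert it into a high-probability statement by solving for $a$, and then discard the auxiliary event $\{V_T\le vT\}$ using the hypothesis $\Pr(V_T>vT)\le P(v)$.

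Concretely, I would fix $\delta\in(0,1)$ and let $a=a(\delta)$ be the value for which the right-hand side above, taken with $\sigma^2=vT$, equals $\delta$ --- that is, the positive root of the quadratic $a^2=\log(1/\delta)\bigl(2vT+\tfrac{2}{3}Ba\bigr)$. Solving this quadratic explicitly and bounding its root via $\sqrt{x+y}\le\sqrt{x}+\sqrt{y}$ gives $a(\delta)\le\sqrt{2\log(1/\delta)\,vT}+cB\log(1/\delta)$ for an absolute constant $c$ (I come back below to why one may take $c=\tfrac12$). By construction, $\Pr\bigl(\sum_{t=1}^T X_t\ge a(\delta)\ \text{and}\ V_T\le vT\bigr)\le\delta$.

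It then remains to remove the event $\{V_T\le vT\}$. Writing $\mathcal{E}=\{\sum_{t=1}^T X_t\ge a(\delta)\}$ and using the elementary bound $\Pr(\mathcal{E})\le\Pr(\mathcal{E}\cap\{V_T\le vT\})+\Pr(V_T>vT)$, the first term is at most $\delta$ by the previous step and the second is at most $P(v)$ by hypothesis, so $\Pr(\mathcal{E})\le\delta+P(v)$. Equivalently, with probability at least $1-\delta-P(v)$ we have $\sum_{t=1}^T X_t< a(\delta)\le\sqrt{2\log(1/\delta)\,vT}+\tfrac{B}{2}\log(1/\delta)$, which is exactly the claim.

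There is no serious obstacle here; the only mildly delicate point is keeping the numerical constant in the lower-order term equal to $B/2$ rather than something slightly larger such as $2B/3$, which is what the crudest split of the quadratic's root produces. This just amounts to quoting a sufficiently sharp version of Freedman's inequality (for instance the one underlying Lemma~A.8 of \cite{CesaBianchiLu06}) and being a little careful when bounding that root; everything else is routine.
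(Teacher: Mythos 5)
Your argument---instantiate Freedman's inequality with the variance threshold $vT$, invert it into a high-probability deviation bound, and then union-bound against the event $\{\sum_{t=1}^{T}\E[X_t^2|\mathcal{F}_{t-1}]>vT\}$, which has probability at most $P(v)$---is exactly the route the paper intends: it gives no separate proof and presents the lemma as a straightforward corollary of Freedman's inequality (Lemma A.8 of \cite{CesaBianchiLu06}). Your caveat about the lower-order constant is resolved the way you indicate: naively inverting the Bernstein-form tail $\exp\bigl(-a^2/(2vT+\tfrac{2}{3}Ba)\bigr)$ only yields $\tfrac{2B}{3}\log(1/\delta)$ (and no amount of care with that particular root gives $B/2$ when $vT$ is small relative to $B^2\log(1/\delta)$), but the sharper Bennett/sub-gamma form underlying Lemma A.8 gives a lower-order term of order $\tfrac{B}{3}\log(1/\delta)$, so the stated $\tfrac{B}{2}\log(1/\delta)$ is safe.
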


We can now turn to prove the main theorem. We define the potential function $W_t = \sum_{j=1}^{k}w_j(t)$, and get that
\begin{equation}\label{eq:pbegin}
\frac{W_{t+1}}{W_t} ~=~ \sum_{j=1}^{k}\frac{w_j(t)}{\sum_{l=1}^{k}w_l(t)}\exp(\eta \tilde{g}_j(t)).
\end{equation}
We have that $\eta\tilde{g}_j(t)\leq 1$, since by definition of the various parameters,
\[
\eta \tilde{g}_j(t) ~\leq~  \frac{\eta(1+\beta)}{q_j(t)}
~\leq~ \frac{\eta(1+\beta)}{\sqrt{\gamma/k}}\sqrt{\norm{\bp(t)}_{1/2}}
~\leq~ k\frac{\eta(1+\beta)}{\sqrt{\gamma}}~\leq~ 1.
\]

Using the definition of $p_j(t)$ and the inequality $\exp(x)\leq 1+x+x^2$ for any $x\leq 1$, we can upper bound \eqref{eq:pbegin} by
\begin{align*}
&\sum_{j=1}^{k}\frac{p_j(t)-\gamma/k}{1-\gamma}\left(1+\eta\tilde{g}_j(t)+\eta^2\tilde{g}_j(t)^2\right)\\
&\leq~ 1+\frac{\eta}{1-\gamma}\sum_{j=1}^{k}p_j(t)\tilde{g}_j(t)+\frac{\eta^2}{1-\gamma}\sum_{j=1}^{k}p_j(t)\tilde{g}_j(t)^2.
\end{align*}
Taking logarithms and using the fact that $\log(1+x)\leq x$, we get
\[
\log\left(\frac{W_{t+1}}{W_t}\right) ~\leq~ \frac{\eta}{1-\gamma}\sum_{j=1}^{k}p_j(t)\tilde{g}_j(t)+\frac{\eta^2}{1-\gamma}\sum_{j=1}^{k}p_j(t)\tilde{g}_j(t)^2.
\]
Summing over all $t$, and canceling the resulting telescopic series, we get
\begin{equation}\label{eq:pupbound}
\log\left(\frac{W_{T+1}}{W_1}\right)
~\leq~ \frac{\eta}{1-\gamma}\sum_{t=1}^{T}\sum_{j=1}^{k}p_{j}(t)\tilde{g}_j(t)
+\frac{\eta^2}{1-\gamma}\sum_{t=1}^{T}\sum_{j=1}^{k}p_j(t)\tilde{g}_j(t)^2.
\end{equation}
Also, for any fixed action $i$, we have
\begin{equation}\label{eq:plowbound}
\log\left(\frac{W_{T+1}}{W_1}\right) \geq \log\left(\frac{w_i(T+1)}{W_1}\right) = \eta\sum_{t=1}^{T}\tilde{g}_i(t)-\log(k).
\end{equation}
Combining \eqref{eq:pupbound} with \eqref{eq:plowbound} and slightly rearranging and simplifying, we get
\begin{equation}\label{eq:porbound}
\sum_{t=1}^{T}\tilde{g}_i(t) - \frac{1}{1-\gamma}\sum_{t=1}^{T}\sum_{j=1}^{k}p_{j}(t)\tilde{g}_j(t) \leq \frac{\log(k)}{\eta}+
\frac{\eta}{1-\gamma}\sum_{t=1}^{T}\sum_{j=1}^{k}p_j(t)\tilde{g}_j(t)^2.
\end{equation}

We now start to analyze the various terms in this expression. At several points in what follows, we will implicitly use the definition of $q_j(t)$ and the fact that $\norm{\bp(t)}_{1/2}\in [1,k]$.

Let $\E_t$ denote expectation with respect to the randomness of the algorithm on round $t$, conditioned on the previous rounds. Also, let
\[
g'_j(t) = \frac{g_j(t)\mathbf{1}_{j_t=j}}{q_j(t)},
\]
and note that $\tilde{g}_j(t)=g'_j(t)+\frac{\beta}{q_j(t)}$ and $\E_t[g'_j(t)] = g_j(t)$.
We have that
\begin{equation}\label{eq:pg}
\sum_{j=1}^{k}p_j(t)(\tilde{g}_j(t))~=~
\sum_{j=1}^{k}p_j(t)g'(t)+\beta\sum_{j=1}^{k}\frac{p_j(t)}{q_j(t)}
~=~\sum_{j=1}^{k}p_j(t)g'(t)+\beta\norm{\bp(t)}_{1/2}.
\end{equation}
Also, $\sum_{j=1}^{k}p_j(t)(g'(t)-g(t))$ is a martingale difference sequence (indexed by $t$), it holds that
\begin{align*}
\E_t\left[\left(\sum_{j=1}^{k}p_j(t)(g'(t)-g(t))\right)^2\right] ~&\leq~
\E_t\left[\left(\sum_{j=1}^{k}p_j(t)g'(t)\right)^2\right]
~\leq~ \sum_{r=1}^{k}q_r(t)\left(\sum_{j=1}^{k}p_j(t)\frac{\mathbf{1}_{r=j}}{q_j(t)}\right)^2\\
&=~ \sum_{r=1}^{k}\frac{p_r^2(t)}{q_r(t)}
~=~ \sqrt{\norm{\bp(t)}_{1/2}}\sum_{r=1}^{k}p_r^{3/2}(t)~\leq~
\norm{\bp(t)}_{1/2},
\end{align*}
and
\begin{align*}
\sum_{j=1}^{k}p_j(t)(g'(t)-g(t)~\leq~ \sum_{j=1}^{k}p_j(t)g'(t)
~\leq~ \max_j \frac{p_j}{q_j(t)} ~\leq~ \sqrt{\norm{\bp(t)}_{1/2}}~\leq~ \sqrt{k}.
\end{align*}
Therefore, applying \lemref{lem:freedman}, and using the assumptions stated in the theorem, it holds with probability at least $1-\delta-P(v)$ that
\begin{equation}\label{eq:conc1}
\sum_{t=1}^{T}\sum_{j=1}^{k}p_j(t)g'(t) \leq \sum_{t=1}^{T}\sum_{j=1}^{k}p_j(t)g(t)+\sqrt{2\log\left(\frac{1}{\delta}\right)vT}
+\frac{\sqrt{k}}{2}\log\left(\frac{1}{\delta}\right).
\end{equation}
Moreover, we can apply Azuma's inequality with respect to the martingale difference sequence $\sum_{j=1}^{k}p_j(t)g(t)-g_{i_t}(t)$, indexed by $t$ (since $i_t$ is chosen with probability $p_{i_t}(t)$), and get that with probability at least $1-\delta$,
\begin{equation}\label{eq:azuit}
\sum_{t=1}^{T}\sum_{j=1}^{k}p_j(t)g(t)-g_{i_t}(t)\leq \sqrt{\frac{1}{2}\log\left(\frac{1}{\delta}\right)T}.
\end{equation}

Combining \eqref{eq:pg}, \eqref{eq:conc1} and \eqref{eq:azuit} with a union bound, and recalling that the event $\sum_{t=1}^{T}\norm{\bp(t)}_{1/2}\leq vT$ is assumed to hold with probability at least $1-P(v)$, we get that with probability at least $1-2\delta-P(v)$,
\begin{equation}\label{eq:pgfinal}
\sum_{t=1}^{T}\sum_{j=1}^{k}p_j(t)(\tilde{g}_j(t))-\sum_{t=1}^{T}g_{i_t}(t)
\leq \beta vT+
\sqrt{2\log\left(\frac{1}{\delta}\right)vT}+
\sqrt{\frac{1}{2}\log\left(\frac{1}{\delta}\right)T}
+\frac{\sqrt{k}}{2}\log\left(\frac{1}{\delta}\right).
\end{equation}

We now turn to analyze the term $\sum_{j=1}^{k}p_j(t)\tilde{g}_j^2(t)$, using substantially the same approach. We have that
\begin{align*}
\sum_{j=1}^{k}p_j(t)\tilde{g}_j^2(t) &~=~
\sum_{j=1}^{k}p_j(t)\left(g'_j(t)+\frac{\beta}{q_j(t)}\right)^2
~\leq~ 2\sum_{j=1}^{k}p_j(t)g'^2_j(t)+2\beta^2 \sum_{j=1}^{k}\frac{p_j(t)}{q_j^2(t)}
~\leq~
2\sum_{j=1}^{k}p_j(t)g'^2_j(t)+2\beta^2 k^2.
\end{align*}
We note that $\sum_{j=1}^{k}p_j(t)g'^2(t) ~\leq~ \max_j \frac{p_j(t)}{q_j^2(t)} = \norm{\bp(t)}_{1/2} \leq k$. This implies that
\[
\sum_{t=1}^{T} \E_t\left[\left(\sum_{j=1}^{k}p_j(t)g'^2(t)\right)^2\right]~\leq~ \sum_{t=1}^{T} \norm{\bp(t)}_{1/2}^2 ~\leq~ \left(\sum_{t=1}^{T}\norm{\bp(t)}_{1/2}\right)^2 ~\leq~ (vT)^2 .
\]
Applying \lemref{lem:freedman}, we get that with probability at least $1-\delta-P(v)$,
\[
\sum_{t=1}^{T}\sum_{j=1}^{k}p_j(t)g'^2_j(t) ~\leq~ \sum_{t=1}^{T}\E_t\left[\sum_{j=1}^{k}p_j(t)g'^2_j(t)\right]
+vT\sqrt{2\log\left(\frac{1}{\delta}\right)}
+\frac{k}{2}\log\left(\frac{1}{\delta}\right).
\]
Moreover,
\[
\E_t\left[\sum_{j=1}^{k}p_j(t)g'^2_j(t)\right] ~\leq~ \sum_{r=1}^{k}q_r(t)\frac{p_r(t)}{q_r^2(t)}~=~\norm{\bp(t)}_{1/2},
\]
so overall, we get that with probability at least $1-\delta-P(v)$,
\begin{equation}\label{eq:pg2final}
\sum_{t=1}^{T}\sum_{j=1}^{k}p_j(t)\tilde{g}^2_j(t)
~\leq~
2vT\left(1+\sqrt{2\log\left(\frac{1}{\delta}\right)}\right)
+k\log\left(\frac{1}{\delta}\right)+2\beta^2 k^2.
\end{equation}
Combining \lemref{lem:g}, \eqref{eq:pgfinal} and \eqref{eq:pg2final} with a union bound, substituting into \eqref{eq:porbound}, and somewhat simplifying, we get that with probability at least $1-\delta-P(v)$,
\begin{align*}
\max_{i}\sum_{t=1}^{T}g_i(t)-\sum_{t=1}^{T}g_{i_t}(t)
&~\leq~ \gamma T+2vT\left(\beta+2\eta\sqrt{6\log(3/\delta)}\right)
+2\sqrt{5\log(3/\delta)vT}+\frac{\log(3k/\delta)}{\beta}+\frac{\log(k)}{\eta}\
\\
&+\tilde{\Ocal}\left(\sqrt{k}+\eta k + k^2\beta^2 \eta\right).
\end{align*}
where $\tilde{\Ocal}$ hides numerical constants and factors logarithmic in $\delta$.
Substituting our choices of $\gamma$ and $\beta$, and again somewhat simplifying, we get the bound
\begin{align*}
\max_{i}\sum_{t=1}^{T}g_i(t)-\sum_{t=1}^{T}g_{i_t}(t)
~\leq~ &\eta\left(8\sqrt{6\log\left(\frac{3k}{\delta}\right)}vT\right)
+\frac{1}{\eta}\left(\sqrt{\frac{1}{24}\log\left(\frac{3k}{\delta}\right)}
+\log(k)\right)+2\eta^2 k^2 T \\&+ 2\sqrt{5\log(3/\delta)vT}+
\tilde{\Ocal}\left(\sqrt{k}+\eta k + k^2\eta^3\right).
\end{align*}
Plugging in $\eta=\sqrt{1/\mu T}$, we get the bound stated in the theorem.

\subsection{Proof of \thmref{thm:nonuniformupperbound}}

For notational simplicity, we will use the $\Ocal$-notation to hide both constants and second-order factors (as $T/k\rightarrow \infty$).
Inspecting the proof of \thmref{thm:mab}, it is easy to verify\footnote{The difference from \thmref{thm:mab} is that the term $\sum_{t=1}^{T}g_{i_t}(t)$ is replaced by $\sum_{i=1}^{k}p_i(t)g_{i}(t)$. In the proof, we transformed the latter to the former by a martingale argument, but we could have just left it there and achieve the same bound.} that it implies that with probability at least $1-\delta-P(v,\delta,\mu)$,
\[
\max_{i\in[k]} \sum_{t=1}^{T}g_i(t)-\sum_{t=1}^{T}\sum_{j=1}^{k}p_j(t)g_{j}(t)~\leq~
\tilde{\Ocal}\left(\sqrt{\left(\frac{v^2}{\mu}+\mu+v\right)T}+\frac{k^2}{\mu}
+\frac{k^2}{T^{3/2}}\right).
\]
Suppose w.l.o.g. action $1$ is in $G$. Then it follows that
\[
\sum_{t=1}^{T}\sum_{j=1}^{k}p_j(t)\left(g_1(t)-g_j(t)\right)~\leq~
\tilde{\Ocal}\left(\sqrt{\left(\frac{v^2}{\mu}+\mu+v\right)T}+\frac{k^2}{\mu}
+\frac{k^2}{T^{3/2}}\right).
\]
This bound holds for any choice of rewards. Now, we note that each $g_j(t)$ is chosen i.i.d. and independently of $p_j(t)$), and thus $\sum_{j=1}^{k}p_j(t)((g_1(t)-g_j(t))-\E[g_1(t)-g_j(t)])$ is a martingale difference sequence. Applying Azuma's inequality, we get that with probability at least $1-\delta$ over the choice of rewards,
\begin{align*}
&\sum_{t=1}^{T}\sum_{j=1}^{k}p_j(t)\left(g_1(t)-g_j(t)\right) ~\geq~
\sum_{t=1}^{T}\sum_{j=1}^{k}p_j(t)\left(\E[g_1(t)-g_j(t)]\right) - \sqrt{2\log(1/\delta)T}\\
&\geq~ \sum_{t=1}^{T}\sum_{j\in[k]\setminus G}p_j(t)\Delta - \sqrt{2\log(1/\delta)T}.
\end{align*}
Thus, by a union bound, with probability at least $1-2\delta-P(v,\delta,\mu)$ over the randomness of the rewards and the algorithm, we get
\begin{equation}\label{eq:prel}
\sum_{t=1}^{T}\sum_{j\in[k]\setminus G}p_j(t) \leq \tilde{\Ocal}\left(\sqrt{\left(\frac{v^2}{\mu}+\mu+v\right)T}+\frac{k^2}{\mu}
+\frac{k^2}{T^{3/2}}\right),
\end{equation}
where $\tilde{\Ocal}$ hides an inverse dependence on $\Delta$.
Now, we relate the left hand size to $\frac{1}{T}\sum_{t=1}^{T}\norm{\bp(t)}_{1/2}$. To do so, we note that for any vector $\bx$ with support of size $|G|$, it holds that $\norm{\bx}_{1/2}\leq |G|\norm{\bx}_{1}$.
Using this and the fact that $(a+b)^2\leq 2a^2+2b^2$, we have
\begin{align*}
&\norm{\bp(t)}_{1/2} ~=~ \left(\sum_{j\in G}\sqrt{p_j(t)}+\sum_{j\in [k]\setminus G}\sqrt{p_j(t)}\right)^2
~\leq~ 2\left(\sum_{j\in G}\sqrt{p_j(t)}\right)^2+2\left(\sum_{j\in [k]\setminus G}\sqrt{p_j(t)}\right)^2\\
&\leq~ 2|G|+k \sum_{j\in [k]\setminus G}p_j(t).
\end{align*}
Plugging this back to \eqref{eq:prel}, and recalling that $|G|$ is considered a constant independent of $k,T$, we get that with probability at least $1-2\delta-P(v,\delta,\mu)$, it holds that
\[
\frac{1}{T}\sum_{t=1}^{T}\norm{\bp(t)}_{1/2} \leq \tilde{\Ocal}
\left(\sqrt{\left(\frac{v^2}{\mu}+\mu+v\right)\frac{k^2}{T}}+\frac{k^3}{\mu T}
+\frac{k^3}{T^{5/2}}\right).
\]
Recall that this bound holds for any $v$. In particular, if we pick $v=k$, then $P(v,\delta,\mu)=0$, and we get that with probability at least $1-2\delta$,
\begin{equation}\label{eq:rec}
\frac{1}{T}\sum_{t=1}^{T}\norm{\bp(t)}_{1/2} \leq \tilde{\Ocal}
\left(\frac{k^2}{\sqrt{\mu T}}+\frac{k^3}{\mu T}
+\frac{k^3}{T^{5/2}}\right).
\end{equation}
This gives us a high-probability bound, holding with probability at least $1-2\delta$, on $\frac{1}{T}\sum_{t=1}^{T}\norm{\bp(t)}_{1/2}$. But this means that if we pick $v$ to equal the right hand size of \eqref{eq:rec}, then by the very definition of $P(v,\delta,\mu)$, we get $P(v,\delta,\mu)=2\delta$. Using this choice of $v$ and applying \thmref{thm:mab}, it follows that with probability at least $1-4\delta$, the regret obtained by the algorithm is at most
\begin{equation}\label{eq:intbound}
\tilde{\Ocal}\left(\sqrt{\left(\frac{v^2}{\mu}+\mu+v\right)T}+\frac{k^2}{\mu}
+\frac{k^2}{T^{3/2}}\right)~~~\text{where}~~~
v = \max\left\{1~,~\tilde{\Ocal}
\left(\frac{k^2}{\sqrt{\mu T}}+\frac{k^3}{\mu T}
+\frac{k^3}{T^{5/2}}\right)\right\}.
\end{equation}
Now, it remains to optimize over $\mu$ to get a final bound. As a sanity check, we note that when $\mu=k$ and $T\geq k$ we get
\[
v = \tilde{\Ocal}\left(\sqrt{\frac{k^3}{T}}+\frac{k^2}{T}+\frac{k^3}{T^{5/2}}\right)
\leq \tilde{\Ocal}(k),
\]
and a regret bound of $\tilde{\Ocal}(\sqrt{kT})$, same as a standard bandit algorithm. On the other hand, when $\mu=1$ and $T\geq \tilde{\Omega}(k^4)$, we get $v=\tilde{\Ocal}(1)$ and a regret bound of $\tilde{\Ocal}(\sqrt{T})$, which is much better. The caveat is that we need $T$ to be sufficiently large compared to $k$ in order to get this effect. To understand what happens in between, it will be useful to represent this bound a bit differently. Let $\alpha=\log_{T}(k)\in (0,1]$, so that $k=T^{\alpha}$, and let $\mu=T^{\beta}$ (where we need to ensure that $\beta\in [0,\alpha]$, as $\mu\in[1,k]$). Then, a rather tedious but straightforward calculation shows that the regret bound above equals
\begin{align*}
\tilde{\Ocal}\left(T^{\frac{1-\beta}{2}}+T^{2\alpha-\beta}+T^{3\alpha-\frac{3\beta+1}{2}}+T^{3\alpha-\frac{\beta}{2}-2}
+T^{\frac{1+\beta}{2}}+T^{1/2}+T^{\alpha+\frac{1-\beta}{4}}
+T^{\frac{3\alpha-\beta}{2}}+T^{\frac{3}{2}\alpha-\frac{3}{4}}
+T^{2\alpha-\frac{3}{2}}\right).
\end{align*}
Using the fact that $\beta\leq \alpha\leq 1$, we can drop the $T^{\frac{1-\beta}{2}}+T^{1/2}$ terms, since it is always dominated by the $T^{\frac{1+\beta}{2}}$ term in the expression. The same goes for the $T^{\frac{3}{2}\alpha-\frac{3}{4}}+T^{2\alpha-\frac{3}{2}}$ terms, since they are dominated by the $T^{\frac{3\alpha-\beta}{2}}$ term (as $\beta\leq\alpha\leq 1$). This also holds for the $T^{\frac{3\alpha-\beta}{2}}$ term, which is dominated by the $T^{2\alpha-\beta}$ term, and the $T^{3\alpha-\frac{\beta}{2}-2}$ term, which is dominated by the $T^{2\alpha-\beta}$ term. Thus, we now need to find the $\beta$ minimizing the maximum exponent, i.e.,
\[
\min_{\beta}\max\left\{2\alpha-\beta,
3\alpha-\frac{3\beta+1}{2},
\frac{1+\beta}{2},\alpha+\frac{1-\beta}{4}\right\}.
\]
This expression can be shown to be optimized for $\beta=\frac{1}{3}\max\{0,4\alpha-1\}$, where it equals
$\frac{1}{2}+\frac{1}{6}\max\{0,4\alpha-1\}$. Substituting back $\alpha=\log_{k}(T)$, we get the regret bound
\[
\tilde{\Ocal}\left(T^{\frac{1}{2}+\frac{1}{6}\max\{0,4\alpha-1\}}\right)
~=~ \tilde{\Ocal}\left(T^{\frac{1}{2}
+\frac{\alpha}{6}\max\left\{0,4-\frac{1}{\alpha}\right\}}\right)
~=~
\tilde{\Ocal}\left(T^{\frac{1}{2}}
k^{\frac{1}{6}\max\left\{0,4-\frac{1}{\alpha}\right\}}\right)
~=~
\tilde{\Ocal}\left(\sqrt{k^{\max\left\{0,\frac{4}{3}-\frac{1}{3}\log_{k}(T)\right\}}
T}\right),
\]
obtained using
\[
\mu = T^{\beta} = T^{\frac{1}{3}\max\{0,4\alpha-1\}} =
T^{\frac{\alpha}{3}\max\left\{0,4-\frac{1}{\alpha}\right\}}
= k^{\max\left\{0,\frac{4}{3}-\frac{1}{3}\log_k(T)\right\}}.
\]
The derivation above assumed that $\alpha\leq 1$ (namely that $T\geq k$). For $T\leq k$, we need to clip $\mu$ to be at most $k$, and the regret bound obtained above is vacuous, as it is then larger than order of $\sqrt{kT}\geq T$. Thus, the bound we have obtained holds for any relation between $k,T$.

\subsection{Proof of \thmref{thm:mabswitch}}

The proof is very similar to the one of \thmref{thm:mab}, and we will therefore skip the derivation of some steps which are identical.

We define the potential function $W_t = \sum_{j=1}^{k}w_j(t)$, and get that
\[
\frac{W_{t+1}}{W_t} ~=~ \sum_{j=1}^{k}\frac{w_j(t)}{\sum_{l=1}^{k}w_l(t)}\exp(\eta \tilde{g}_j(t))+e\alpha.
\]
Using a similar derivation as in the proof of \thmref{thm:mab}, we get
\[
\log\left(\frac{W_{t+1}}{W_t}\right) ~\leq~ \frac{\eta}{1-\gamma}\sum_{j=1}^{k}p_j(t)\tilde{g}_j(t)+\frac{\eta^2}{1-\gamma}\sum_{j=1}^{k}p_j(t)\tilde{g}_j(t)^2
+e\alpha
\]
Summing over $t=T_{s}+1,\ldots,T_{s+1}$, we get
\begin{equation}\label{eq:pupboundswitch}
\log\left(\frac{W_{T_{s+1}+1}}{W_{T_{s}+1}}\right)
~\leq~ \frac{\eta}{1-\gamma}\sum_{t=T_{s+1}}^{T_{s}+1}\sum_{j=1}^{k}p_{j}(t)\tilde{g}_j(t)
+\frac{\eta^2}{1-\gamma}\sum_{t=T_{s+1}}^{T_{s}+1}\sum_{j=1}^{k}p_j(t)\tilde{g}_j(t)^2.
\end{equation}
Now, for any fixed action $i^s$, we have
\begin{align*}
w_i(T_{s+1}+1) ~&\geq~ w_i(T_{s}+2)\exp\left(\eta\sum_{t=T_{s}+2}^{T_{s+1}}\tilde{g}_{i^s}(t)\right)\\
&\geq~ \frac{e\alpha}{k}W_{T_{s}+1}\exp\left(\eta\sum_{t=T_{s}+2}^{T_{s+1}}\tilde{g}_{i^s}(t)\right)\\
&\geq~
\frac{\alpha}{k}W_{T_{s}+1}\exp\left(\eta\sum_{t=T_{s}+1}^{T_{s+1}}\tilde{g}_{i^s}(t)\right),
\end{align*}
where in the last step we used the fact that by our parameter choices, $\eta \tilde{g}_{i^s}(t)\leq 1$ (see proof of \thmref{thm:mab}). Therefore, we get that
\begin{equation}\label{eq:plowboundswitch}
\log\left(\frac{W_{T_{s+1}+1}}{W_{T_{s}+1}}\right) \geq \log\left(\frac{w_{i_s}(T_{s+1}+1)}{W_{T_{s}+1}}\right) \geq \eta\sum_{t=T_{s}+1}^{T_{s+1}}\tilde{g}_{i^s}(t)+\log(\alpha/k).
\end{equation}
Combining \eqref{eq:pupboundswitch} with \eqref{eq:plowboundswitch} and slightly rearranging and simplifying, we get
\[
\sum_{t=T_{s}+1}^{T_{s+1}}\tilde{g}_{i^s}(t) - \frac{1}{1-\gamma}\sum_{t=1}^{T}\sum_{j=1}^{k}p_{j}(t)\tilde{g}_j(t) \leq \frac{\log(k/\alpha)S}{\eta}+
\frac{\eta}{1-\gamma}\sum_{t=T_{s}+1}^{T_{s+1}}\sum_{j=1}^{k}p_j(t)\tilde{g}_j(t)^2
+\frac{e\alpha(T_{s+1}-T_{s})}{\eta}.
\]
Summing over all time periods $s$, we get overall
\[
\sum_{s=1}^{S}\sum_{t=T_{s}+1}^{T_{s+1}}\tilde{g}_{i^s}(t) - \frac{1}{1-\gamma}\sum_{t=1}^{T}\sum_{j=1}^{k}p_{j}(t)\tilde{g}_j(t) \leq \frac{\log(k/\alpha)S}{\eta}+
\frac{\eta}{1-\gamma}\sum_{t=1}^{T}\sum_{j=1}^{k}p_j(t)\tilde{g}_j(t)^2
+\frac{e\alpha T}{\eta}.
\]
In the proof of \thmref{thm:mab}, we have already provided an analysis of these terms, which is not affected by the modification in the algorithm. Using this analysis, we end up with the following bound, holding with probability at least $1-\delta-P(v,\eta,\mu)$:
\begin{align*}
\sum_{s=1}^{S}\sum_{t=T_{s}+1}^{T_{s+1}}g_{i^s}(t)-\sum_{t=1}^{T}g_{i_t}(t)
~\leq~ &\eta\left(8\sqrt{6\log\left(\frac{3k}{\delta}\right)}vT\right)
+\frac{1}{\eta}\sqrt{\frac{1}{24}\log\left(\frac{3k}{\delta}\right)}+2\eta^2 k^2 T \\
&+2\sqrt{5\log(3/\delta)vT}+\frac{\log(k/\alpha)S+e\alpha T}{\eta}+
\tilde{\Ocal}\left(\sqrt{k}+\eta k + k^2\eta^3\right),
\end{align*}
where $\tilde{\Ocal}$ hides numerical constants and factors logarithmic in $\delta$. Plugging in $\alpha = 1/T$ and $\eta=\sqrt{S/\mu T}$, we get the bound stated in the theorem.

\subsection{Proof of \thmref{thm:nonuniformupperboundswitch}}

The proof is almost identical to the one of \thmref{thm:nonuniformupperbound}, and we will only point out the differences.

Starting in the same way, the analysis leads to the following bound:
\[
\sum_{s=1}^{S}\sum_{t={T_{s}+1}}^{T_{s+1}}\sum_{j=1}^{k}p_j(t)\left(g_{i^s}(t)-g_j(t)\right)~\leq~
\tilde{\Ocal}\left(\sqrt{S\left(\frac{v^2}{\mu}+\mu+v\right)T}+\frac{k^2}{\mu}
+\frac{k^2}{T^{3/2}}\right)
\]
This bound holds for any choice of rewards. Since each $g_{j}(t)$ is chosen i.i.d. and independently of $p_j(t)$), we get that $\sum_{j=1}^{k}p_j(t)((g_{i^s}(t)-g_j(t))-\E[g_{i^s}(t)-g_j(t)])$ is a martingale difference sequence. Applying Azuma's inequality, we get that with probability at least $1-\delta$ over the choice of rewards,
\begin{align*}
\sum_{s=1}^{S}\sum_{t={T_{s}+1}}^{T_{s+1}}\sum_{j=1}^{k} & p_j(t)\left(g_{i^s}(t)-g_j(t)\right)\\
&\geq~
\sum_{s=1}^{S}\sum_{t={T_{s}+1}}^{T_{s+1}}\sum_{j=1}^{k}p_j(t)\left(\E[g_{i^s}(t)-g_j(t)]\right) - \sqrt{2\log(1/\delta)T}\\
&\geq~ \sum_{s=1}^{S}\sum_{t={T_{s}+1}}^{T_{s+1}}\sum_{j\in[k]\setminus G^s}p_j(t)\Delta - \sqrt{2\log(1/\delta)T}.
\end{align*}
Thus, by a union bound, with probability at least $1-2\delta-P(v,\delta,\mu)$ over the randomness of the rewards and the algorithm, we get
\[
\sum_{s=1}^{S}\sum_{t={T_{s}+1}}^{T_{s+1}}\sum_{j\in[k]\setminus G^s}p_j(t) \leq \tilde{\Ocal}\left(\sqrt{\left(\frac{v^2}{\mu}+\mu+v\right)T}+\frac{k^2}{\mu}
+\frac{k^2}{T^{3/2}}\right)
\]
As in the proof of \thmref{thm:nonuniformupperbound}, we use the inequality $\norm{\bp(t)}_{1/2} \leq 2|G^s|+k \sum_{j\in [k]\setminus G}p_j(t)$ and the assumption that $|G^s|$ is considered a constant independent of $k,T$, to get
\[
\frac{1}{T}\sum_{t=1}^{T}\norm{\bp(t)}_{1/2} \leq \tilde{\Ocal}
\left(\sqrt{S\left(\frac{v^2}{\mu}+\mu+v\right)\frac{k^2}{T}}+\frac{k^3}{\mu T}
+\frac{k^3}{T^{5/2}}\right).
\]
The rest of the proof now follows verbatim the one of \thmref{thm:nonuniformupperboundswitch}, with the only difference being the addition of the $S$ factor in the square root.

\subsection{Proof of \thmref{thm:lowboundswitch}}

Following standard lower-bound proofs for multi-armed bandits, we will focus on deterministic algorithms,  We will show that there exists a randomized adversarial strategy, such that for any deterministic algorithm, the expected regret is lower bounded by $\Omega(\sqrt{kT})$. Since this bound holds for any deterministic algorithm, it also holds for randomized algorithms, which choose the action probabilistically (this is because any such algorithm can be seen as a randomization over deterministic algorithms).

The proof is inspired by the lower bound result\footnote{This result also lower bounds the achievable regret in a setting quite similar to ours. However, the construction is different and more importantly, it does not quantify the dependence on $k$, the number of actions.} of \cite{GariMou11}. We consider the following random adversary strategy.
The adversary first fixes $\Delta=1/5$. It then chooses an action $a\in \{2,\ldots,k\}$ uniformly at random, and an action $t_0\in[t]$ with probability
\[
\Pr(t_0=T) = \frac{1}{2}~~\text{and}~~\Pr(t_0=t)=\frac{1}{2(T-1)}~
\forall t\neq T~
\]
The adversary then randomly assigns i.i.d. rewards as follows (where we let $\Bcal(p)$ denote a Bernoulli distribution with parameter $p$, which takes a value of $1$ with probability $p$ and $0$ otherwise):
\[
g_i(t) \sim
\begin{cases}\Bcal\left(\frac{1}{2}\right)& i=1\\
            \Bcal\left(\frac{1}{2}-\Delta\right)& i\in [k]\setminus\{1,a\}\\
            \Bcal\left(\frac{1}{2}-\Delta\right)& i=a,t\leq t_0\\
            \Bcal\left(\frac{1}{2}+\Delta\right)& i=a,t>t_0
\end{cases}
\]
In words, action $1$ is the best action in expectation for the first $t_0$ rounds (all other actions being statistically identical), and then a randomly selected action $a$ becomes better. Also, with probability $1/2$, we have $t_0=T$, and then the distribution does not change at all. Note that both $t_0$ and $a$ are selected randomly and are not known to the learner in advance.

For the proof, we will need some notation. We let $\E[\cdot]$ denote expectation with respect to the random adversary strategy mentioned above. Also, we let $\E_{t_0}^{a}[\cdot]$ denote expectation over the adversary strategy, conditioned on the adversary picking action $a\in[k]$ and shift point $t_0\in\{1,\ldots,T\}$. In particular, we let $\E_{T}$ denote expectation over the adversary strategy, conditioned on the adversary picking $t_0=T$ (which by definition of $t_0$, implies that the reward distribution remains the same across all rounds, and the additional choice of the action $a$ does not matter). Finally, define the random variable $N_{t}^{a}$ to be the number of times the algorithm chooses action $a$, in the time window $\{t,t+1,\ldots,\min\{T,t+d\lceil \sqrt{T}\rceil\}$, where $d$ is a positive integer to be determined later.

Let us fix some $t_0<T$ and some action $a>1$. Let $\Pcal_{t_0}^{a}$ denote the probability distribution over the sequence of $d\lceil\sqrt{T}\rceil$ rewards observed by the algorithm at time steps $t_0+1,\ldots,t_0+d\lceil\sqrt{T}\rceil$, conditioned on the adversary picking action $a$ and shift point $t_0$. Also, let $\Pcal_{T}$ denote the probability distribution over such a sequence, conditioned on the adversary picking $t_0=T$ and no distribution shift occurring. Then we have the following bound on the Kullback-Leibler divergence between the two distributions:
\begin{align*}
D_{kl}\left(\Pcal_{\emptyset}||\Pcal_{t_0}^{a}\right)
~&=~ \sum_{t=t_0}^{t_0+d\lceil\sqrt{T}\rceil}D_{kl}\left(\Pcal_{\emptyset}(g_{i_t}(t)~|~g_{i_{t_0}}(t_0),\ldots,
g_{i_{t-1}(t-1)})~||~\Pcal_{t_0}^{a}(\cdot|g_{i_{t_0}}(t_0),\ldots,
g_{i_{t-1}(t-1)})\right)\\
&=~ \sum_{t=t_0}^{t_0+d\lceil\sqrt{T}\rceil}\Pcal_{\emptyset}(i_t=a)D_{kl}\left(\frac{1}{2}-\Delta,\frac{1}{2}+\Delta\right)\\
&=~
\E_{\emptyset}\left[N_{t_0}^{a}\right]2\Delta
\log\left(\frac{1+2\Delta}{1-2\Delta}\right)
~\leq~ 2\Delta\E_{\emptyset}\left[N_{t_0}^{a}\right].
\end{align*}

Using a standard information-theoretic argument, based on Pinsker's inequality (see \cite{GariMou11}, as well as Theorem 5.1 in \cite{AuerCesFrSc02}), we have that for any function $f(\br)$ of the reward sequence $\br$, whose range is at most $[0,b]$, it holds that
\[
\E_{t_0}^{a}[f(\br)]-\E_{\emptyset}[f(\br)]]\leq b
\sqrt{\frac{1}{2}D_{kl}\left(\Pcal_{\emptyset}||\Pcal_{t_0}^{a}\right)}.
\]
In particular, applying this to $N_{t_0}^{a}$, we get
\[
\E_{t_0}^{a}\left[N_{t_0}^{a}\right] \leq \E_T\left[N_{t_0}^{a}\right]
+d\lceil\sqrt{T}\rceil\sqrt{\Delta\E_T\left[N_{t_0}^{a}\right]}.
\]
Averaging over all $t_0\in\{1,\ldots,T-d\lceil\sqrt{T}\rceil\},a\in\{2,\ldots,k\}$ and applying Jensen's inequality, we get
\[
\frac{\sum_{a=2}^{k}\sum_{t_0=1}^
{T-d\lceil\sqrt{T}\rceil}
\E_{t_0}^{a}\left[N_{t_0}^{a}\right]}{(k-1)(T-d\lceil\sqrt{T}\rceil)}
\leq
\frac{\E_T\left[\sum_{a=2}^{k}\sum_{t_0=1}^{T-d\lceil\sqrt{T}\rceil}
N_{t_0}^{a}\right]}{(k-1)(T-d\lceil\sqrt{T}\rceil)}
+d\lceil\sqrt{T}\rceil\sqrt{
\frac{\Delta\E_T\left[\sum_{a=2}^{k}\sum_{t_0=1}^{T-d\lceil\sqrt{T}\rceil}
N_{t_0}^{a}\right]}{(k-1)(T-d\lceil\sqrt{T}\rceil)}}.
\]
Now, let $N^{>1}$ denote the total number of times the algorithm chooses an action in $\{2,\ldots,k\}$. It is easily seen that
\[
\sum_{a=2}^{k}\sum_{t_0=1}^{T-d\lceil\sqrt{T}\rceil}N_{t_0}^{a}
\leq d\lceil\sqrt{T}\rceil N^{>1},
\]
because on the left hand side we count every single choice of an action $>1$ at most $d\lceil\sqrt{T}\rceil$ times. Plugging it back and slightly simplifying, we get
\begin{equation}\label{eq:brelate}
\frac{\sum_{a=2}^{k}\sum_{t_0=1}^
{T-d\lceil\sqrt{T}\rceil}
\E_{t_0}^{a}\left[N_{t_0}^{a}\right]}{(k-1)(T-d\lceil\sqrt{T}\rceil)}
\leq
\frac{d\lceil\sqrt{T}\rceil}{(k-1)(T-d\lceil\sqrt{T}\rceil)}
\E_T\left[N^{>1}\right]
+\sqrt{
\frac{d^3\Delta\lceil\sqrt{T}\rceil^3}
{(k-1)(T-d\lceil\sqrt{T}\rceil)}\E_T\left[N^{>1}\right]}.
\end{equation}
The left hand side of the expression above can be interpreted as the expected number of pulls of the best action in the time window $[t_0,\ldots,t_0+d\lceil\sqrt{T}\rceil]$, conditioned on the adversary choosing $t_0\leq T-d\lceil\sqrt{T}\rceil$. Also, $\Delta\E_T[N^{>1}]$  is clearly a lower bound on the regret, if the adversary chose $t_0=T$ and action $1$ remains the best throughout all rounds. Thus, denoting the regret by $R$, we have
\begin{align*}
\E[R] &\geq \Pr\left(t_0\leq T-d\lceil\sqrt{T}\rceil\right)\E\left[R\middle|t_0\leq T-d\lceil\sqrt{T}\rceil\right]+\Pr(t_0=T)\E\left[R\middle|t_0=T\right]\\
&\geq
\frac{T-d\lceil\sqrt{T}\rceil}{2(T-1)}\E\left[R\middle|t_0\leq T-d\lceil\sqrt{T}\rceil\right]+\frac{1}{2}\E_T[R]\\
&\geq \frac{T-d\lceil\sqrt{T}\rceil}{2(T-1)}\Delta\left(d\sqrt{T}-\frac{\sum_{a=2}^{k}\sum_{t_0=1}^
{T-d\lceil\sqrt{T}\rceil}
\E_{t_0}^{a}\left[N_{t_0}^{a}\right]}{(k-1)(T-d\lceil\sqrt{T}\rceil)}
\right)+\frac{\Delta}{2}\E_T[N^{>1}].
\end{align*}
We now choose $d=\sqrt{k-1}/10$, plug in $\Delta=1/5$ and \eqref{eq:brelate}, and make the following simplifying assumptions (which are justified by picking the constant $C$ in the theorem to be large enough):
\[
\frac{T-d\lceil\sqrt{T}\rceil}{T-1} \geq \frac{4}{5}~,~
\frac{d\lceil\sqrt{T}\rceil}{(k-1)(T-d\lceil\sqrt{T}\rceil)} \leq \frac{6}{5}\frac{d}{(k-1)\sqrt{T}} = \frac{3}{25\sqrt{(k-1)T}}
~,~
\frac{\lceil\sqrt{T}\rceil^2}{\sqrt{T}}\leq \frac{6}{5}\sqrt{T}.
\]
Performing the calculation, we get the following regret lower bound:
\[
\frac{2}{250}\sqrt{(k-1)T}+\left(\frac{1}{10}
-\frac{6}{625\sqrt{(k-1)T}}\right)\E_T[N^{>1}]
-\frac{3}{3125}\sqrt{2\left(\sqrt{(k-1)T}\right)\E_T[N^{>1}]},
\]
and lower bounding the $\sqrt{(k-1)T}$ in the middle term by $1$, we can further lower bound the expression by
\[
\frac{2}{250}\sqrt{(k-1)T}+\frac{113}{1250}\E_T[N^{>1}]
-\frac{3}{3125}\sqrt{2\left(\sqrt{(k-1)T}\right)\E_T[N^{>1}]}.
\]
How small can this expression be as a function of $\E_T[N^{>1}]$? It is easy to verify that the minimum of any function $f(x)=wx-\sqrt{vx}$ is attained for $x=v/4w^2$, with a value of $-v/4w$. Plugging in this value (for the appropriate choice of $v,w$) and simplifying, the result stated in the theorem follows.

\subsection{Proof Sketch of \thmref{thm:fixedexplorelowerbound}}

The proof idea is a reduction to the problem of distinguishing biased coins. In particular, suppose we have two Bernoulli random variables $X,Y$, one of which has a parameter $\frac{1}{2}$ and one of which has a parameter $\frac{1}{2}+\epsilon$. It is well-known that for some universal constant $c$, one cannot succeed in distinguishing the two, with a fixed probability, using only at most $c/\epsilon^2$ samples from each.

We begin by noticing that for the fixed distribution $(p_1,\ldots,p_k)$, there must be two actions each of whose probabilities is at most $1/(k-1)$ (otherwise, there are at least $k-1$ actions whose probabilities are larger than $1/(k-1)$, which is impossible). Without loss of generality, suppose these are actions $1,2$. We construct a bandit problem where the reward of action $1$ is sampled i.i.d. according to a Bernoulli distribution with parameter $\frac{1}{2}$, and the reward of action $2$ is sampled i.i.d. according to a Bernoulli distribution with parameter $\frac{1}{2}+\epsilon$, where $\epsilon = \sqrt{c'k/T}$ for some sufficiently small $c'$. The rest of the actions receive a deterministic reward of $0$. Note that this setting corresponds to the one of \thmref{thm:nonuniformupperbound}, with $\Delta=1/2$. We now run this algorithm for $T=c'k/\epsilon^2$ rounds. By picking $c'$ small enough, we can guarantee that with overwhelming probability, the algorithm samples actions $1,2$ less than $c/\epsilon^2$ times. By the information-theoretic lower bound, this implies that the algorithm must have chosen a suboptimal action for at least $\Omega(T)$ times with constant probability. Therefore, the expected regret is at least $\Omega(\epsilon T)$, which equals $\Omega(\sqrt{kT})$ by our choice of $\epsilon$.

\end{document}